\documentclass{article}

\usepackage{arxiv}

\usepackage[utf8]{inputenc} 
\usepackage[T1]{fontenc}    
\usepackage{hyperref}       
\usepackage{url}            
\usepackage{booktabs}       
\usepackage{amsfonts}       
\usepackage{nicefrac}       
\usepackage{microtype}      
\usepackage{lipsum}
\usepackage{graphicx}

\usepackage{amsmath}
\usepackage{bm}
\usepackage{booktabs}
\usepackage{multirow}
\usepackage{pifont}
\usepackage{bbm}
\usepackage{subcaption}
\usepackage{float}
\usepackage{enumitem}
\usepackage{amssymb}
\usepackage{xcolor}
\usepackage[blocks]{authblk}

\newtheorem{assumption}{Assumption}
\newtheorem{proposition}{Proposition}
\newtheorem{lemma}{Lemma}
\newtheorem{proof}{Proof}
\newtheorem{theorem}{Theorem}
\newtheorem{corollary}{Corollary}
\newcommand{\E}{\mathbb{E}}

\title{MARCO: Click-Intent Decomposition for Calibrated Ads Conversion Prediction}

\author[1]{Shiwen Shen}
\author[1]{Xiru Huang}
\author[1]{Liang Luo}
\author[1]{Jianbo Sun}
\author[1]{He Lyu}
\author[1]{Zihang Fu}
\author[1]{Ivonne Xu}
\author[1]{Zhizhuo Li}
\author[1]{Zhengyu Zhang}
\author[1]{Pei-Ju Sung}
\author[1]{Yunmiao Wang}
\author[1]{Zixuan Wang}
\author[1]{Zhengli Zhao}
\author[1]{Qiang Jin}
\author[1]{Mike Jermann}
\author[1]{Mingda Li}
\author[1]{Yang Xiao}
\author[1]{Bhavana Challa}
\author[1]{Brooke Bian}
\author[1]{Yang Li}
\author[1]{Ashish Chamoli}
\author[1]{Bibek Bhusal}
\author[1]{Danning Di}
\author[1]{Yuan Jin}
\author[1]{Meet Raval}
\author[1]{Zhiwen Chen}
\author[1]{Boyao Sun}
\author[1]{Shuguang Wang}
\author[1]{Yunlong He}
\author[1]{Yantao Yao}
\author[1]{Sagar Chordia}
\author[1]{Wenlin Chen}
\author[1]{Santanu Kolay}
\author[1]{Qin Huang}
\author[1]{Ellie Wen}

\affil[1]{Meta AI, Menlo Park, California, USA}

\begin{document}
\maketitle

\begin{abstract}
Not all clicks are equal. Industrial ads ranking decouples conversion probability into click-through rate (CTR) and post-click conversion rate (CVR), yet treats every click as the same event. 
In reality, users provide a free, self-generated signal of intent through their physical UI interactions. Different click types on the same ad exhibit a $4\times$ difference in actual conversion rates. By conflating these signals, the standard CVR model systematically under-predicts high-intent clicks and over-predicts low-intent ones, which is a severe bias masked by near-perfect aggregate calibration.

We propose MARCO (\textbf{M}ulti-intent \textbf{A}ds \textbf{R}anking \textbf{C}omposition \textbf{O}ptimization), a framework that resolves this bias by decomposing each click by intent. 
Using the logged click type as a free behavioral label, MARCO trains per-intent CVR heads on homogeneous populations, and at serving time composes their per-intent CVR estimates under a predicted distribution over intents. 
Theoretically, we prove that decomposition never raises population risk, give the exact headroom under squared loss and non-negativity under the deployed loss, and show through a routing-efficiency dial how much of it reaches serving. Because the population-optimal score is unchanged, any gain is a finite-capacity estimation and calibration effect that we validated both offline and online.
For deployment at scale, we further cast multi-impression, multi-click attribution as credit assignment with a bias-variance tradeoff analogous to RL return estimation, showing last-impression, first-click attribution is the low-bias, low-variance, deterministic choice under production constraints, and derive three consistency conditions enforced end-to-end at scale.

Deployed at binary intent granularity, MARCO corrects per-intent calibration to approximately $100\%$, lifts conversions per click by $+2.80\%$, and drives $+0.98\%$ cumulative improvement in topline metrics. 
\end{abstract}

\keywords{computational advertising, conversion rate prediction, click-through rate prediction, model calibration, click-intent decomposition, prediction composition, credit attribution}

\section{Introduction}
\label{sec:intro}

Industrial ads ranking systems estimate impression conversion probability by factorizing it into a click-through rate (CTR) and a post-click conversion rate (CVR) as an exact marginal factorization~\cite{ma2018esmm,wen2020esm2}:
\begin{equation}
\label{eq:standard}
\Phi_{\text{std}}(x) = \lambda(x) \cdot \mu(x),
\end{equation}
where $x$ denotes impression-time features, $\lambda(x) := P(\text{click} \mid \text{imp}, x)$ is the CTR, $\mu(x) := P(\text{conv} \mid \text{click}, x)$ is the CVR, and $\Phi_{\mathrm{std}}(x) := P(\text{conv} \mid \text{imp}, x)$ is the conversion score used for auction ranking. Each component is estimated by a dedicated machine learning model—a compositional approach that has served as the industry standard for over a decade.

The primary operational challenge lies in CVR estimation: predicting conversion rates across a mixed click population is difficult due to substantial variance across interaction types. High-intent actions convert at elevated rates, whereas low-intent actions convert far less frequently. Compressing this heterogeneous click mixture into a single scalar prediction introduces group-conditional bias that no finite-capacity model can resolve. Conversely, partitioning clicks into homogeneous intent strata improves estimation efficiency, leveraging foundational principles of post-stratified estimation~\cite{holtsmith1979poststrat,little1993poststrat}. Unlike classical post-stratification, the intent stratum in ad ranking is a post-impression outcome unobserved during scoring. We address this by predicting the intent distribution at inference time and routing per-stratum estimates through the predicted distribution.

Standard interaction logs already contain the signal required to decouple user intent. Modern social ads feature distinct UI surfaces: call-to-action (CTA) taps navigate off-platform, whereas social interactions (e.g., likes, comments) reflect lightweight on-platform engagement. Conventional architectures collapse these behaviors into a single click label, leaving a zero-cost supervision signal unexploited. MARCO instead leverages logged click types as free supervision for intent. Empirically, high-intent CTA taps convert at nearly $4\times$ the rate of social actions—a gap single-head models cannot capture.

Ignoring this signal forces a single CVR head to fit distinct conversion funnels, causing systematic subgroup \emph{miscalibration}: under-predicting high-intent traffic while over-predicting low-intent traffic. This directly degrades ranking, as inflated low-intent scores displace higher-converting candidates in the auction. Because these opposing errors cancel in aggregate, standard monitoring tools report healthy overall calibration while masking severe internal distortion. We term this silent failure mode \emph{click-intent heterogeneity} and formalize its observability limit in Section~\ref{sec:problem}.

While prior literature addresses adjacent modeling challenges, no existing framework restructures the CTR-CVR composition formula to isolate heterogeneous click populations. Vertical funnel methods~\cite{ma2018esmm,wen2020esm2,wang2022escm2,dcmt2023,xi2021aitm} decompose the user journey after the click. Multi-task learning and mixture-of-experts (MoE) architectures \cite{ma2018mmoe,tang2020ple,chang2023pepnet} optimize routing across shared experts, yet ultimately blend representations into a single prediction over the mixed click population. Expert gating modifies architectural capacity rather than the supervised target, leaving the underlying label distribution unaltered for each task. Post-hoc calibration techniques~\cite{platt1999probabilistic,zadrozny2002transforming,calmatters2023,beyondpartitions2023} cannot condition on click intent, as the specific interaction type is a post-impression event unobservable at inference time. While intent models~\cite{netflix2024intent} enrich feature representations, they leave the underlying prediction composition unchanged.

We propose \textbf{MARCO} (\textbf{M}ulti-intent \textbf{A}ds \textbf{R}anking \textbf{C}omposition \textbf{O}ptimization), a framework that resolves click-intent heterogeneity through structural intent decomposition. MARCO partitions click events into intent-differentiated sub-categories, estimating dedicated CTR and CVR predictions over more homogeneous sub-populations. At training time, MARCO uses the logged click type as a free supervision label; at serving time, it predicts the latent intent distribution across categories. By reformulating the prediction composition, MARCO eliminates group-conditional miscalibration while remaining orthogonal to and fully composable with existing CVR architectures. Our contributions are:
\begin{enumerate}[leftmargin=*]
\item \textbf{Problem Formulation \& Observability Limit.} We identify \emph{click-intent heterogeneity} as a root cause of systematic miscalibration that conventional architectures cannot eliminate, and prove that a single impression-time prediction is inherently incapable of achieving per-intent calibration (\S\ref{sec:problem}).
\item \textbf{Free Supervision \& Mechanism Isolation.} We leverage logged click types as zero-cost behavioral supervision labels. Through controlled ablations, we demonstrate that single-output architectures consistently fail to eliminate per-intent miscalibration, even when augmented with auxiliary intent tasks, historical features, or Mixture-of-Experts (MoE) gating. Conversely, exposing per-intent outputs via output-layer decomposition eliminates over $99\%$ of calibration error with zero annotation cost and negligible parameter overhead (\S\ref{sec:experiments}).

\item \textbf{Production System \& Industrial Impact.} We articulate the end-to-end system design, including a credit-attribution design and the cross-pipeline consistency conditions the system must satisfy (\S\ref{sec:attribution}). Deployed at scale across Meta's ad platforms under binary intent granularity, MARCO increases conversions per click by $+2.80\%$ in a production holdback and delivers a $+0.98\%$ cumulative topline metric lift across successive launches (\S\ref{sec:experiments}).
\item \textbf{Theoretical Foundation \& Routing Efficiency.} We formalize intent decomposition as a population-level class enlargement, proving weak dominance over pooled models and establishing generic strictly positive headroom $\Delta_{\mathcal{F}}$. Furthermore, we introduce the routing-efficiency parameter $\eta$ to quantify the fraction of theoretical headroom realized at inference (\S\ref{sec:theory}).
\end{enumerate}

The remainder of this paper is organized as follows. Section~\ref{sec:problem} formalizes the click-intent heterogeneity problem. Section~\ref{sec:theory} establishes the theoretical framing and headroom analysis for MARCO. Section~\ref{sec:method} details the proposed MARCO model, while Section~\ref{sec:attribution} describes the credit attribution design and end-to-end system architecture. Section~\ref{sec:experiments} presents our offline evaluations and online A/B testing results. All mathematical proofs are deferred to Appendix~\ref{sec:appendix}.

\begin{figure*}[t]
\centering
\includegraphics[width=\textwidth]{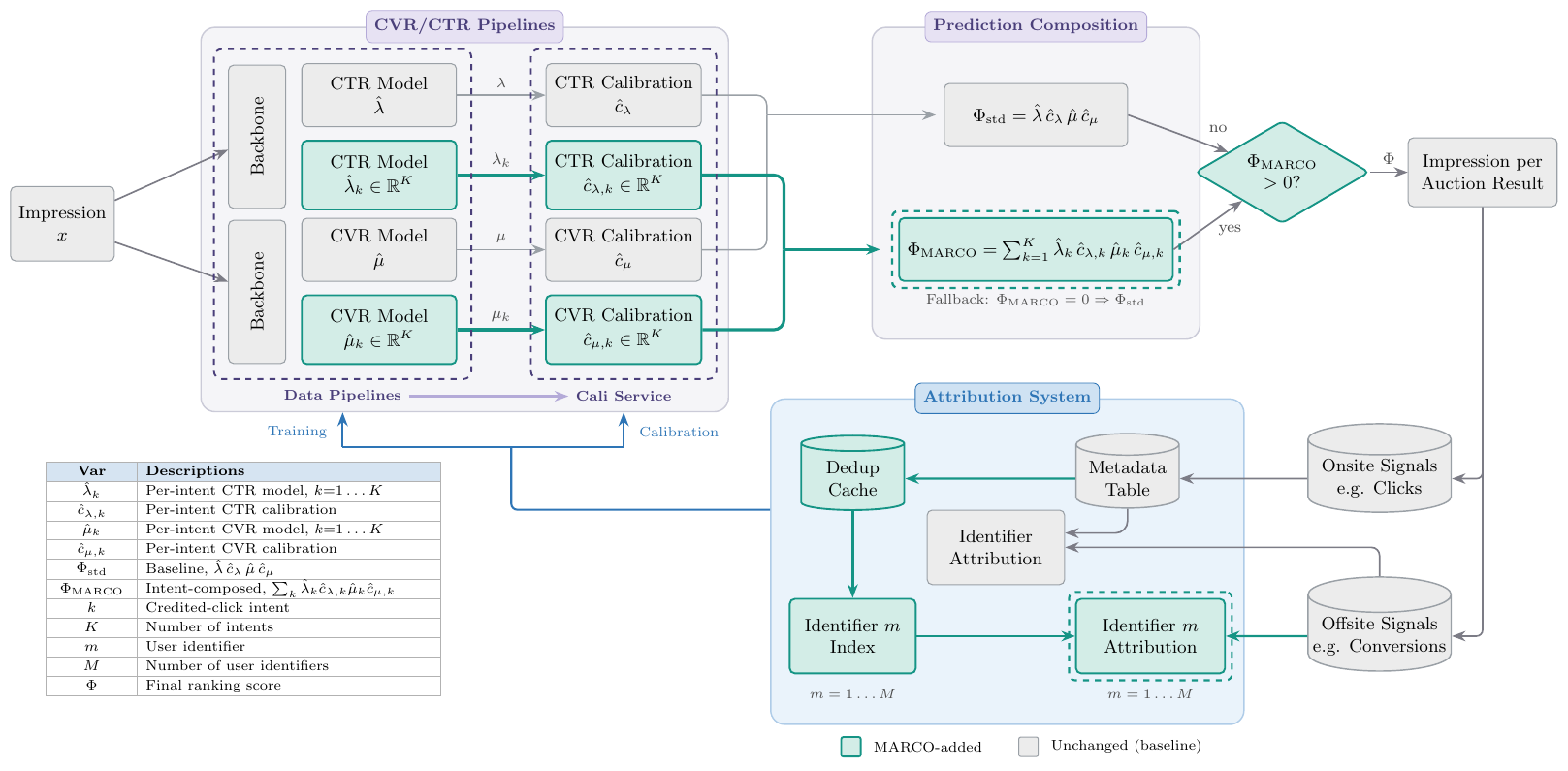}
\caption{The MARCO ads-ranking system. Gray marks the unchanged baseline and teal marks what MARCO adds.}
\label{fig:system}
\end{figure*}

\section{Click-Intent Heterogeneity in Ads Ranking}
\label{sec:problem}

Recall from Eq.~\ref{eq:standard} that conventional ads ranking scores an impression via $\Phi_{\mathrm{std}}(x) = \lambda(x) \cdot \mu(x)$, relying on a single pooled conversion head $\mu(x)$ across all click types. To capture this intent variation, we partition the click space into $K$ mutually exclusive and collectively exhaustive intent categories covering all click types, indexed by $k \in \{1,\dots,K\}$, grouped by historical click-type conversion likelihood. Intent routing relies strictly on logged interaction types rather than conversion outcomes, eliminating target leakage.

For impression features $x$, we define the per-intent CTR $\lambda_k(x) := P(\text{click}_k \mid \text{imp}, x)$, per-intent CVR $\mu_k(x) := P(\text{conv} \mid \text{click}_k, x)$, and intent distribution $\pi_k(x) := P(k \mid \text{click}, x)$. By the law of total probability, the overall impression conversion probability estimated by MARCO is:
\begin{equation}\label{eq:marco}
\Phi_{\mathrm{MARCO}}(x) = \sum_{k=1}^{K} \lambda_k(x) \mu_k(x).
\end{equation}
While $\Phi_{\mathrm{MARCO}}(x)$ mathematically generalizes to arbitrary $K$, our evaluation and deployment focus on the binary instantiation ($K=2$), capturing the dominant contrast between off-platform CTA taps and on-platform social engagements. \emph{Throughout the remainder of this paper, we drop the explicit feature argument $(x)$ for brevity whenever feature conditioning is clear from context.}

We evaluate prediction quality using the \emph{calibration ratio} (predicted over observed conversion rate). Because a pooled head $\mu$ outputs a single scalar across heterogeneous clicks, opposing errors from under-predicting high-intent traffic and over-predicting low-intent traffic cancel in aggregate, masking severe subgroup bias behind a deceptively healthy overall ratio.

Existing ads ranking architectures (the gray baseline path in Figure~\ref{fig:system}) are structurally blind to per-intent bias across every ranking stage:
\begin{enumerate}[leftmargin=*]
    \item \textbf{Attribution System:} Clicks and conversions are joined without distinguishing interaction types, generating pooled training targets.
    \item \textbf{Model Training:} CTR and CVR heads optimize a single loss over mixed click traffic, ignoring intent differentiation.
    \item \textbf{Calibration Service:} Calibration relies strictly on impression-time features, leaving unobserved post-impression intent miscalibrated beneath aggregate cancellation.
    \item \textbf{Prediction Composition:} Pooled estimates are composed via Eq.~\ref{eq:standard}, propagating single-click bias into auction scores.
\end{enumerate}

This limitation represents a structural observability barrier rather than a model capacity constraint. As long as a ranking system outputs a single scalar prediction $\mu$ prior to interaction observation, no added capacity, architectural gating, or post-hoc recalibration can achieve simultaneous group-conditional calibration. We formalize this fundamental limit in Proposition~\ref{prop:observability}.

\begin{proposition}[Per-intent calibration is unattainable by a single output before the click occurs]
\label{prop:observability}
Before the click intent is observed, under a proper scoring rule and a fixed impression-time feature set, no model emitting a single CVR value $\mu$ can be simultaneously calibrated for all intent categories whenever the per-intent CVRs $\mu_k$ differ. It equals at most one $\mu_k$.
\end{proposition}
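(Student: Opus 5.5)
The plan is to formalize what ``calibrated for intent $k$'' means at a fixed impression-time feature value $x$, and then show that these requirements are mutually incompatible for a single scalar. Fix $x$ and let $\mu(x)$ be the single CVR the model emits; it is a measurable function of impression-time features only. Per-intent calibration for category $k$ means the prediction matches the observed conversion rate among clicks of type $k$, i.e. $\E[\text{conv} \mid \text{click}_k, x] = \mu(x)$. The left side is $\mu_k(x)$ by definition. So the model is calibrated for intent $k$ exactly when $\mu(x) = \mu_k(x)$. If we use the calibration ratio of Section~\ref{sec:problem}, this reads $\mu(x)/\mu_k(x) = 1$. If calibration is imposed on a set of $x$ of positive measure, the same reduction applies pointwise almost everywhere.

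The key steps, in order, are as follows.
\begin{enumerate}
\item Because the intent index $k$ is a post-impression outcome, $\mu$ cannot take $k$ as an argument. The same scalar $\mu(x)$ is therefore compared against every $\mu_k(x)$.
\item Suppose the model were calibrated for two categories $j \neq k$. Then $\mu_j(x) = \mu(x) = \mu_k(x)$. This contradicts the hypothesis that the per-intent CVRs differ.
\item Hence at most one equality $\mu(x) = \mu_k(x)$ can hold, which is the statement that $\mu$ ``equals at most one $\mu_k$.''
\item If the $\mu_k(x)$ are merely not all equal, rather than pairwise distinct, the sharper conclusion is only that $\mu$ cannot satisfy all of them simultaneously. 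I will state both versions.
\end{enumerate}

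I also want to explain where a proper scoring rule actually sends $\mu$. Under a strictly proper scoring rule the population minimizer over impression-time measurable functions is the pooled conditional mean
\begin{equation*}
\mu^\star(x) = \sum_{k=1}^{K} \pi_k(x)\,\mu_k(x),
\end{equation*}
by the tower property together with $\pi_k(x) = P(k \mid \text{click}, x)$. When the $\mu_k(x)$ differ and at least two $\pi_k(x) > 0$, this convex combination lies strictly between $\min_k \mu_k(x)$ and $\max_k \mu_k(x)$. Two consequences follow. First, the optimizer coincides with none of the $\mu_k$. Second, it under-predicts the highest-intent stratum and over-predicts the lowest, with errors that cancel in the $\pi$-weighted aggregate, since $\sum_k \pi_k(\mu^\star - \mu_k) = 0$. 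This step illustrates the aggregate-masking claim and shows that added capacity does not help: any richer model is still a function of $x$ alone, and its optimum is still $\mu^\star$.

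The main obstacle is definitional rather than technical: making ``simultaneously calibrated for all intent categories'' precise for a general model output, not just at the optimum. I will handle it by conditioning on $x$ (or on a calibration bin $B$) and noting that bin-level calibration for intent $k$ requires
\begin{equation*}
\E[\mu \mid \text{click}_k, B] = \E[\mu_k \mid \text{click}_k, B].
\end{equation*}
The pigeonhole argument then goes through with conditional expectations in place of point values, provided the per-intent rates differ within the bin.
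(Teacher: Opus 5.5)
Your core argument is exactly the paper's proof, and it is correct. You fix $x$, note that calibration on intent $k$ means $\mu(x)=\mu_k(x)$, and observe that one scalar cannot equal two distinct reals. You add that a strictly proper score drives the single output to the pooled mean $\sum_k\pi_k\mu_k$. Your caveat about ties, that ``at most one'' means at most one distinct value, is a fair refinement. The problems are in two supplementary claims, one of which you present as resolving the ``main obstacle.'' Both are false as stated and should be dropped or repaired.

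First, ``the optimizer coincides with none of the $\mu_k$'' holds only for $K=2$. For $K\ge 3$, lying strictly between $\min_k\mu_k$ and $\max_k\mu_k$ does not rule out hitting an intermediate stratum. For example, $\mu_k=(0.1,0.15,0.2)$ with $\pi=(1/3,1/3,1/3)$ gives $\mu^\star=0.15=\mu_2$. This is precisely why the proposition and the paper say ``at most one.''

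Second, the bin-level extension does not go through. In $\E[\mu\mid\text{click}_k,B]$, the law of $X$ given $(\text{click}_k, X\in B)$ is proportional to $\rho\,\lambda\,\pi_k$, so it depends on $k$. The left-hand sides are therefore not one common scalar, and the pigeonhole step has nothing to act on. Concretely, take $B=\{x_1,x_2\}$ with equal mass and equal CTR, $\pi_H(x_1)=0.9$, $\pi_H(x_2)=0.1$, $\mu_H\equiv 0.4$, $\mu_L\equiv 0.1$. The impression-time output $\mu(x_1)=7/16$, $\mu(x_2)=1/16$ satisfies $0.9\cdot\tfrac{7}{16}+0.1\cdot\tfrac{1}{16}=0.4$ and $0.1\cdot\tfrac{7}{16}+0.9\cdot\tfrac{1}{16}=0.1$. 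So it is calibrated on $B$ for both intents, although $\mu_H\neq\mu_L$ everywhere.

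The argument survives only when $B$ is a level set $\{\mu=p\}$ of the prediction. Then every left side equals $p$, and the needed hypothesis is that the within-level-set rates $\E[\mu_k\mid\text{click}_k,\mu=p]$ differ. Likewise, your sentence about imposing calibration ``on a set of $x$ of positive measure'' is valid only if calibration is imposed at each $x$, not in aggregate over the set. Neither extension is needed: the proposition, like the paper's proof, is a statement at fixed $x$.
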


Section~\ref{sec:theory} formalizes this mechanism as a class-enlargement headroom, proving it is non-negative and generically strictly positive. Section~\ref{sec:experiments} directly quantifies this headroom on held-out benchmarks.

\section{Theoretical Analysis}\label{sec:theory}

We frame intent decomposition as a population-level \emph{class enlargement} ($\mathcal{F} \subseteq \mathcal{G}$) and establish three key theoretical guarantees:
\begin{enumerate}[leftmargin=*, label=(\arabic*)]
    \item \textbf{Weak Dominance:} The intent-decomposed class never yields higher risk than the pooled class at the population optimum (Theorem~\ref{app:respgate}(i)).
    \item \textbf{Genericity:} Strict improvement over pooled models occurs almost surely across parameter space (Proposition~\ref{prop:genericity}).
    \item \textbf{Routing Monotonicity:} Expanding CTR model capacity provably realizes a larger fraction of the theoretical headroom (Proposition~\ref{prop:routing_monotonicity}).
\end{enumerate}
These guarantees characterize the best achievable risk within each hypothesis class. Because the pooled and decomposed scores coincide at the true Bayes optimum ($\Phi_{\mathrm{MARCO}} = \Phi_{\mathrm{std}}$, Lemma~\ref{prop:bayes_equivalence}), any realized gain is strictly a finite-capacity estimation and calibration effect.

\paragraph{Class Enlargement and Headroom ($\Delta_{\mathcal{F}}$).}
Let $\mathcal{F}$ denote the hypothesis class of a single pooled conversion head, and let $\mathcal{G} = \{ g = \sum_{k=1}^K \pi_k f_k : f_k \in \mathcal{F} \}$ denote the intent-decomposed class. We define the theoretical headroom opened by decomposition as the risk gap between the best-in-class pooled model and the best-in-class decomposed model:
\begin{equation}\label{eq:headroom}
\Delta_{\mathcal{F}} := \inf_{f \in \mathcal{F}} \mathcal{R}(f) - \inf_{g \in \mathcal{G}} \mathcal{R}(g)
\end{equation}
Because $\mathcal{F} \subseteq \mathcal{G}$, weak dominance ($\Delta_{\mathcal{F}} \ge 0$) holds universally regardless of backbone architecture, feature representations, or loss function (Theorem~\ref{app:respgate}(i)). Under squared loss, $\Delta_{\mathcal{F}}$ resolves to an exact $L^2$ projection energy $\| P_{\mathcal{G}}\mu - P_{\mathcal{F}}\mu \|^2 \ge 0$ (Proposition~\ref{prop:exact_headroom}). Under the deployed Normalized Entropy metric, while non-linearity prevents a closed-form expression, weak dominance ($\Delta_{\mathcal{F}} \ge 0$) holds unconditionally. 

Furthermore, Proposition~\ref{prop:genericity} establishes that parameter configurations yielding zero headroom ($\Delta_{\mathcal{F}} = 0$) form a set of Lebesgue measure zero. Thus, strict headroom ($\Delta_{\mathcal{F}} > 0$) is generic, i.e., decomposition strictly improves population risk whenever per-intent conversion rates differ. In Section~\ref{sec:experiments}, we directly measure this headroom on held-out production traffic at $K=2$.

\paragraph{Routing Efficiency and CTR Capacity.}
The theoretical headroom $\Delta_{\mathcal{F}}$ is a CVR-side quantity. How much of this potential reaches inference depends on the intent distribution $\hat{\pi}_k = \hat{\lambda}_k / \hat{\lambda}$ predicted by the CTR model. We formalize this transfer via the \emph{routing efficiency}:
\begin{equation}\label{eq:routing_efficiency}
\hat{\eta} := 1 - \frac{\varepsilon_{\mathrm{route}}}{\Delta_{\mathcal{F}}}
\end{equation}
where $\varepsilon_{\mathrm{route}}$ represents the heterogeneity-weighted routing cost incurred by imperfect intent predictions (Lemma~\ref{lem:routing_cost}). The routing cost obeys $\varepsilon_{\mathrm{route}} \le \mathbb{E}[ \|\hat{\pi} - \pi\|^2 s^2 ]$, where $s^2 = \sum_k (\mu_k - \mu)^2$ measures intent variance. This cost vanishes where intent is homogeneous ($s=0$) and concentrates strictly where per-intent conversion rates diverge. 

In Proposition~\ref{prop:routing_monotonicity}, we prove that expanding CTR model capacity expands the realizable routing class $\Pi^{(c)}$, monotonically reducing routing error $\varepsilon_{\mathrm{route}}^*$ and increasing serving efficiency $\hat{\eta}$. Consequently, larger CTR models unlock a higher fraction of the CVR headroom—a theoretical property directly validated by our offline capacity sweeps in Section~\ref{sec:experiments}.

\section{The MARCO Model}
\label{sec:method}

Guided by our theoretical framing, MARCO introduces a minimal architectural modification to capture per-intent estimation gains while remaining strictly constrained in parameter and latency budgets. Rather than predicting a single scalar CVR, MARCO estimates per-intent CTR and CVR vectors and composes the total conversion probability $\Phi_{\mathrm{MARCO}}$ via the law of total probability as formulated in Eq.~\ref{eq:marco}. Estimating each $\mu_k$ on a homogeneous click sub-population eliminates the label conflation that causes systematic group-conditional bias. In our production platform, we instantiate MARCO at $K=2$.

\begin{figure}[t]
\centering
\includegraphics[width=0.5\columnwidth]{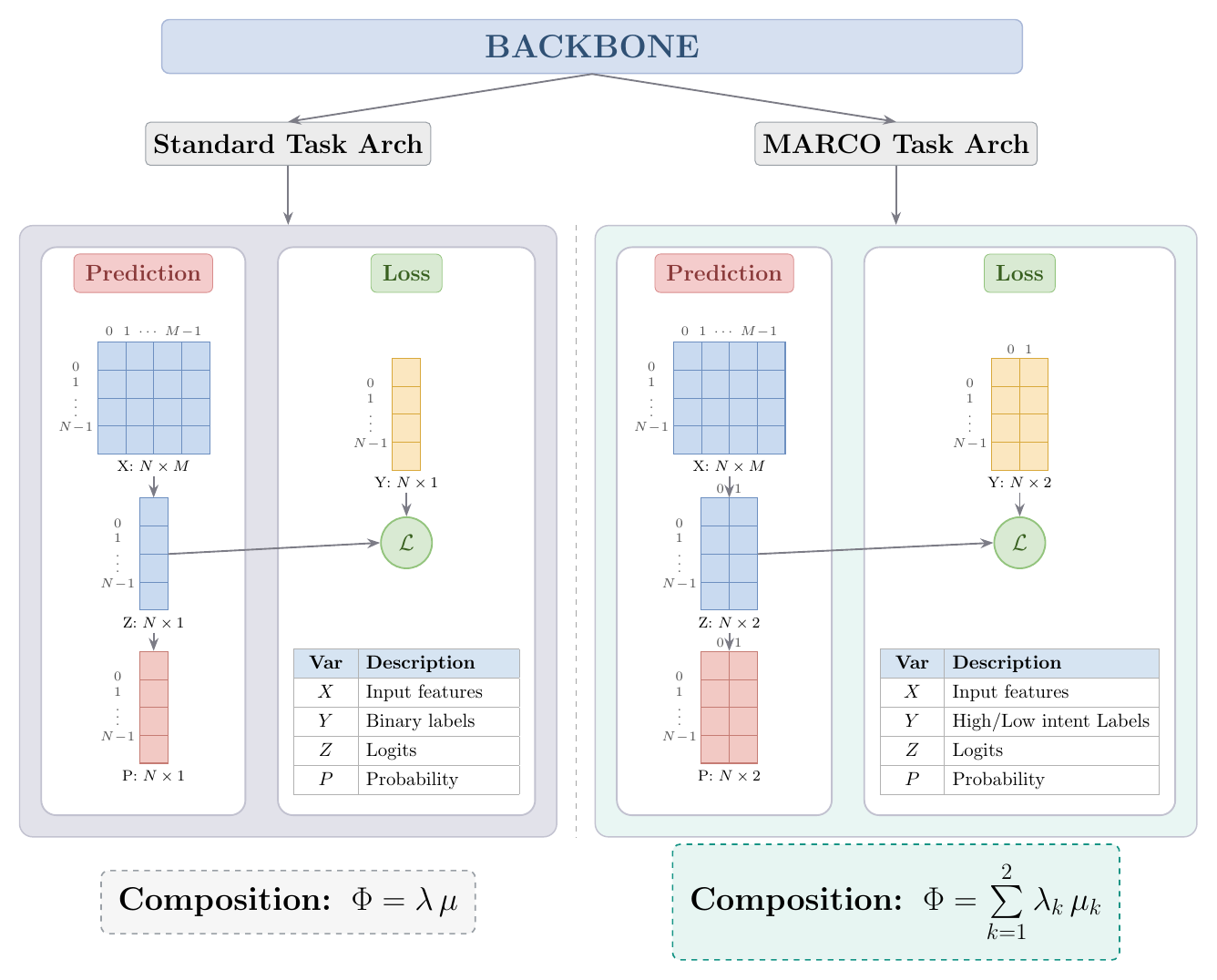}
\caption{Model architecture: Standard vs. MARCO.}
\label{fig:model_arch}
\end{figure}

\paragraph{Free Behavioral Supervision.}
MARCO leverages logged user interaction types as zero-cost supervision targets. Rather than requiring manual annotation, the intent label for each converted click is derived directly from physical UI behavior, serving as a behavioral pretext signal for latent intent. At training time, each per-intent CVR head $\mu_k$ is trained exclusively on clicks belonging to intent category $k$ (attributed via Section~\ref{sec:attribution}). At serving time, because the true click intent is unobserved prior to user interaction, MARCO evaluates all per-intent CVR heads and weights them by the predicted intent distribution $\hat{\pi}_k = \hat{\lambda}_k / \hat{\lambda}$, computing the expected CVR over predicted latent intents.

\paragraph{Vector-Head Architecture.}
As shown in Figure~\ref{fig:model_arch}, MARCO implements this decomposition via multi-output task heads without altering the underlying feature representations or model backbones. Shared embedding layers and deep neural network (DNN) representations remain intact for both CTR and CVR models. Alongside the baseline scalar projection, each model appends a $K$-dimensional vector head ($\mathbb{R}^d \to \mathbb{R}^K$, where $d$ denotes the final backbone representation dimension). The CTR head emits the complete per-intent click rate vector $(\lambda_1, \dots, \lambda_K)$, while the CVR head emits the corresponding conversion rate vector $(\mu_1, \dots, \mu_K)$. The original scalar output is retained strictly as an operational baseline for automated fallback (Section~\ref{sec:system_design}).

\paragraph{Parameter Efficiency \& Operational Overhead.}
Constructing $K$ separate dedicated models would significantly inflate parameter footprint, memory consumption, and serving latency, causing severe model fragmentation~\cite{luo2026lattice}. By expanding scalar projection heads into $K$-vector heads within existing backbones, MARCO adds fewer than $10^{-7}\%$ parameters relative to total model capacity. Because representation layers and embedding lookups are shared, serving latency remains entirely unaffected while successfully restoring per-intent calibration.

\paragraph{System Integration.}
MARCO operates within an end-to-end industrial ranking architecture (Figure~\ref{fig:system}), where data logging, credit attribution, model training, online serving, and real-time calibration are managed by independent infrastructure components. For the composite score in Eq.~\ref{eq:marco} to remain semantically coherent, all pipeline stages must maintain strict agreement on intent category assignments—a system requirement we formalize and enforce in Section~\ref{sec:attribution}.

\section{Attribution and System Design}
\label{sec:attribution}

Deploying the decomposed composition Eq.~\ref{eq:marco} in a distributed industrial ranking environment requires all logging, training, serving, and calibration pipelines to maintain strict agreement on which interaction defines the credited click intent for every impression. This section formalizes the credit attribution rationale (\S\ref{sec:credit_assignment}), establishes the cross-pipeline consistency invariants (\S\ref{sec:consistency}), and details the production infrastructure built to enforce them at scale (\S\ref{sec:system_design}).

\subsection{Credit Assignment Framework}
\label{sec:credit_assignment}

In conversion-optimized advertising, a user's pre-conversion journey spans multiple impressions and multi-touch interaction sequences. We model this trajectory as an ordered history $H$ of $n$ impressions, where each impression $\text{imp}_i$ logged at render time $\tau_i$ contains a multiset $C_i$ of $m_i \ge 0$ within-impression clicks logged at times $t_{i,j}$:
\begin{equation}
H = \big[ (\text{imp}_i, \tau_i, C_i) \big]_{i=1}^{n} \longrightarrow \text{conv}, \quad \text{where } C_i = \big\{ (\text{click}_{i,j}, t_{i,j}) \big\}_{j=1}^{m_i}
\end{equation}
with $\tau_1 < \tau_2 < \dots < \tau_n$ and $\tau_i \le t_{i,1} \le t_{i,2} \le \dots \le t_{i,m_i}$. Because a single rendered ad unit can generate a sequence of heterogeneous interactions (e.g., a social like at $t_{i,1}$ followed by a CTA tap at $t_{i,3}$), credit attribution must assign two non-negative weight vectors: a cross-impression weight $w_i$ and a within-impression click weight $v_{i,j}$. The credited conversion outcome is distributed across training tuples via the joint weight $w_i \cdot v_{i,j}$, yielding the per-impression training target decomposition:
\begin{equation}\label{eq:attribution_target}
\bar{\Phi}(H) = \sum_{i=1}^{n} w_i \sum_{j=1}^{m_i} v_{i,j} \cdot P(\text{conv} \mid \text{imp}_i, \text{click}_{i,j})
\end{equation}
enforcing $\sum_i w_i = 1$ and $\sum_j v_{i,j} = 1$ to guarantee total conversion mass.

\paragraph{Bias-Variance Tradeoff \& RL Analog.}
Selecting $(w_i, v_{i,j})$ represents a structural bias-variance tradeoff analogous to temporal-difference return estimation in reinforcement learning. As summarized in Table~\ref{tab:attribution_tradeoffs}, impressions and clicks serve distinct causal roles:
\begin{itemize}[leftmargin=*]
    \item \textbf{Impression Axis (Bias Control):} An impression is the external stimulus presented to the user. Weighting towards the \emph{last impression} ($w_n = 1$) isolates the causally proximal context that directly triggered conversion, minimizing context attribution bias (analogous to Monte Carlo return estimation).
    \item \textbf{Click Axis (Variance Control):} A click sequence within a fixed impression represents a stochastic exploration process. Selecting the \emph{first click} ($v_{i,1} = 1$) captures the user's immediate intent signature before subsequent taps inject behavioral noise, minimizing intent estimation variance (analogous to $\text{TD}(0)$ temporal-difference learning).
\end{itemize}

\begin{table}[t]
\caption{Attribution design space and its qualitative rationale: bias, variance, real-time feasibility, and RL credit-assignment analogs.}
\label{tab:attribution_tradeoffs}
\centering
\setlength{\tabcolsep}{3pt}
\begin{tabular}{l cc ccc l}
\toprule
& \textbf{Imp.} & \textbf{Click} & \textbf{Bias} & \textbf{Var.} & \textbf{Feasibility} & \textbf{RL analog} \\
\midrule
First-touch  & First & First & High & Low  & Deterministic & TD(0) \\
Last-touch   & Last  & Last  & Low  & High & Req.\ buffering & Monte Carlo \\
Time-decay   & Wtd.  & Wtd.  & Low  & Med. & Req.\ full hist. & GAE($\lambda$) \\
\textbf{MARCO} & \textbf{Last} & \textbf{First} & \textbf{Low} & \textbf{Low} & \textbf{Deterministic} & \textbf{--} \\
\bottomrule
\end{tabular}
\end{table}

\paragraph{Production Feasibility Constraints.}

MARCO's credit-assignment policy balances streaming feasibility on the click axis with context proximity on the impression axis. The consistency conditions of Section~\ref{sec:consistency} mandate that credited intent be deterministic and instantly resolvable without event buffering, ruling out last-click and time-decay rules in favor of first-click attribution. This choice carries minimal empirical risk: across the vast majority of impressions, initial and subsequent clicks share the same intent category, ensuring high intent alignment regardless of the click-axis rule. On the impression axis, we select the last impression to minimize context attribution bias, as it represents the causally proximal stimulus. The resulting \emph{last-impression, first-click} policy is both statistically robust and instantly resolvable at serving time.

\subsection{Cross-Pipeline Consistency Conditions}
\label{sec:consistency}
Because CTR and CVR training and calibration operate as independent distributed services, all four pipelines must resolve to the exact same credited click intent $\kappa(i) \in \{1, \dots, K\}$ for every impression $i \in I$. Training consistency requires
  $\kappa_{\mathrm{CTR}}^{\mathrm{train}}(i) = \kappa_{\mathrm{CVR}}^{\mathrm{train}}(i)$, calibration consistency requires $\kappa_{\mathrm{CTR}}^{\mathrm{cali}}(i) = \kappa_{\mathrm{CVR}}^{\mathrm{cali}}(i)$, and the end-to-end invariant requires all four to agree as  $\kappa_{\mathrm{CTR}}^{\mathrm{train}}(i) = \kappa_{\mathrm{CVR}}^{\mathrm{train}}(i) = \kappa_{\mathrm{CTR}}^{\mathrm{cali}}(i) = \kappa_{\mathrm{CVR}}^{\mathrm{cali}}(i)$.  The first two align the training and calibration stages internally, but they do not link training to calibration. The shared single-source dedup cache of Section~\ref{sec:system_design} supplies that link and enforces the invariant by construction.

\subsection{Production System Engineering}
\label{sec:system_design}

To enforce global consistency across distributed streaming environments (Figure~\ref{fig:system}), MARCO deploys two core infrastructure services: a \emph{Click Metadata Table} recording event-level payloads and a low-latency, durable key-value \emph{Deduplication Cache}.

\paragraph{Concurrent Click Attribution.}
When interaction events arrive concurrently across distributed edge servers, the engine executes an atomic \emph{first-write-wins} check on the Deduplication Cache. The earliest click event within the last attributed impression is registered as the credited intent $\kappa(i)$, while subsequent clicks are marked as deduplicated. This guarantees key uniqueness and write-atomicity under heavy write concurrency.

\paragraph{Schema-Agnostic Conversion Attribution.}
Offsite conversions arrive via asynchronous pipelines with heterogeneous user identifier schemas. When a conversion event triggers a lookup, the attribution system queries the Deduplication Cache using whichever identifier is present. Because all identifier paths resolve to the unified underlying cache entry established during click processing, all downstream consumers (CTR/CVR trainers and calibrators) observe the identical intent label, fulfilling consistency conditions of Section~\ref{sec:consistency}.

\paragraph{Composition-Level Fallback Guardrail.}
To protect auction delivery against model dilution or upstream prediction outages in concurrent A/B testing environments, MARCO incorporates a zero-latency fallback operator:
\begin{equation}\label{eq:fallback}
\Phi = \Phi_{\mathrm{MARCO}} + \mathbb{I}(\Phi_{\mathrm{MARCO}} = 0) \cdot \Phi_{\mathrm{std}}
\end{equation}
If experimental dilution causes intent-specific predictions to vanish (e.g., $\Phi_{\mathrm{MARCO}} = 0$), the system reverts to the standard scalar score $\Phi_{\mathrm{std}} = \lambda \mu$, guaranteeing no-worse-than-baseline robustness.

\paragraph{Cold-Start Protocol.}
MARCO addresses two operational cold-start scenarios:
\begin{enumerate}[leftmargin=*]
    \item \textbf{Unseen Click Types:} New UI interaction features default to the low-intent group, a low-risk strategy since high-intent actions represent established, high-conversion navigation paths.
    \item \textbf{Unwarmed Calibration Services:} To resolve the circular dependency where the real-time calibrator requires prediction traffic before MARCO goes live, we deploy a two-phase dummy composition: $\Phi_{\mathrm{phase1}} = \Phi_{\mathrm{std}} + 0 \cdot \Phi_{\mathrm{MARCO}}$.
    In Phase 1, scores remain identical to production baseline while intent-specific heads generate shadow predictions to warm up the calibration service. Once calibration variance converges (typically 2–3 days), Phase 2 activates the full MARCO composition (Eq.~\ref{eq:fallback}) with zero delivery perturbation.
\end{enumerate}

\section{Experiments}
\label{sec:experiments}
We evaluate MARCO across both offline log benchmarks and large-scale online A/B experiments on Meta's production advertising platforms. All experiments instantiate the framework at binary intent granularity ($K=2$), capturing the primary contrast based on conversion likelihood.
Online experiments span multi-week evaluation windows covering hundreds of millions of users with user-level randomization. All reported online gains are statistically significant at $p < 0.05$ under $95\%$ confidence intervals. Automated fallback (\S\ref{sec:system_design}) triggers on $< 0.1\%$ of traffic, ensuring near-complete treatment exposure.
Our empirical evaluation addresses four primary \textbf{R}esearch \textbf{Q}uestions:
\begin{itemize}[leftmargin=*]
    \item $\mathbf{RQ1}$ \textbf{Per-Intent Calibration:} Does the observability barrier of Proposition~\ref{prop:observability} hold in practice? Do progressively stronger single-output methods such as auxiliary tasks, historical features, and MoE gating remain unable to reduce per-intent miscalibration, while a model that exposes per-intent outputs, such as MARCO, eliminates it?
    \item $\mathbf{RQ2}$ \textbf{Composed Post-Impression NE:} Does MARCO improve end-to-end post-impression conversion prediction, and do empirical results align with theoretical headroom ($\Delta_{\mathcal{F}}$) and routing efficiency ($\hat{\eta}$)?
    \item $\mathbf{RQ3}$ \textbf{Pre-Impression Funnel Recall:} Does restoring per-intent calibration improve candidate retrieval fidelity in early ranking stages?
    \item $\mathbf{RQ4}$ \textbf{Online Business Impact:} What is the topline business impact in live auctions, and how does score separation alter candidate ranking?
\end{itemize}

\subsection{Offline Evaluation}
\label{sec:offline}

\subsubsection{Post-click: per-intent calibration $\mathbf{(RQ1)}$}
\label{sec:exp_post_click}


We compare MARCO against four progressively stronger paradigms for utilizing intent signals short of structural decomposition:
\begin{enumerate}[leftmargin=*]
    \item \textbf{Baseline (Pooled CVR):} An ESMM-style vertical funnel model~\cite{ma2018esmm,wen2020esm2} fitting a single conversion head over all clicks.
    \item \textbf{+ Auxiliary Intent Tasks:} Appends auxiliary task heads to predict per-intent distribution, testing whether auxiliary representation learning resolves pooled head miscalibration.
    \item \textbf{+ Historical Intent Feature:} Feeds historical intent distributions as impression-time input features—the strongest intent signal observable prior to interaction (Proposition~\ref{prop:observability}).
    \item \textbf{+ MoE on Historical Intent:} Deploys a Mixture-of-Experts (MoE) architecture gated on historical intent features, testing whether architectural gating eliminates subgroup bias.
\end{enumerate}

\begin{table}[th]
\caption{Per-intent calibration-error reduction relative to the pooled baseline (\%), higher is better. Single-output architectures are bounded by Proposition~\ref{prop:observability}, whereas MARCO eliminates over $99\%$ of per-intent error.}
\label{tab:offline}
\centering
\begin{tabular}{lcc}
\toprule
Model & High intent & Low intent \\
\midrule
Baseline (pooled) & $0\%$ & $0\%$ \\
+ Auxiliary intent task & $0\%$ & $0\%$ \\
+ Historical intent feature & $1\%$ & $3\%$ \\
+ MoE on historical intent & $2\%$ & $8\%$ \\
\textbf{MARCO (decomposition)} & $\mathbf{> 99\%}$ & $\mathbf{> 99\%}$ \\
\bottomrule
\end{tabular}
\end{table}

Table~\ref{tab:offline} presents the evaluation ladder. Extra capacity through the auxiliary task leaves per-intent calibration unchanged ($0\%$). Historical intent features and MoE gating yield marginal improvements ($\le 8\%$). Because these baselines emit a single output $\mu$ for a mixed click population, Proposition~\ref{prop:observability} imposes a strict observational upper bound: a single prediction cannot simultaneously match distinct conditional targets $\mu_k$. 
MARCO removes essentially all of the per-intent calibration error, more than an order of magnitude above the nearest alternative, so the effect is unambiguous.

This evaluation ladder focuses on single-output architectures to empirically validate the observability limit in Proposition~\ref{prop:observability}. Because these models emit a single prediction prior to interaction, their calibration ceiling remains fixed regardless of added tasks, historical features, or MoE gating. While any architecture with per-intent output heads can bypass this barrier, MARCO provides the minimal, zero-latency design needed at industrial scale.

\subsubsection{Composed Post-impression NE $\mathbf{(RQ2)}$}
\label{sec:exp_post_imp}



The post-impression evaluation assesses the end-to-end conversion prediction $\Phi$ across the global impression space, composing CTR and CVR scores via Eq.~\ref{eq:standard} for the baseline and Eq.~\ref{eq:marco} for MARCO. We evaluate offline performance on production data~\cite{ipcf} across a 7-day evaluation window. Table~\ref{tab:postimp} reports post-impression Normalized Entropy gains across a comprehensive evaluation matrix, systematically varying CTR backbone capacity (Base, Large 5$\times$, and an Oracle router evaluating intent routing under ground-truth $P(k \mid \text{click})$ while using predicted click rates), composition architecture (MARCO vs CTR-split-only), and calibration adjustment strategies (None, CVR-only, CTR-only, and joint CTR \& CVR).

\begin{table*}[t]
\centering
\caption{Composed-model post-impression NE gain across evaluation settings (\%), higher is better, with 95\% confidence intervals reported against standard composition.}
\label{tab:postimp}
\small
\setlength{\tabcolsep}{14pt}
\renewcommand{\arraystretch}{1.18}

\begin{tabular}{c l l r@{\hspace{3pt}}l}
\toprule
\textbf{Related RQ} & \textbf{Composition Logic} & \textbf{Calibration Adjustment} & \multicolumn{2}{c}{\textbf{NE Gain (\%) [95\% CI]}} \\
\midrule
\multicolumn{5}{l}{\textbf{Ablation Analysis (Base Model)}} \\
\midrule
2.1      & MARCO          & None       & $+0.054$ & $[+0.043, +0.067]$ \\
2.1, 2.2 & MARCO          & CTR \& CVR & $+0.223$ & $[+0.210, +0.235]$ \\
2.2.1    & MARCO          & CVR only   & $+0.181$ & $[+0.168, +0.194]$ \\
2.2.2    & MARCO          & CTR only   & $+0.097$ & $[+0.088, +0.110]$ \\
2.3      & CTR-split-only & None       & $+0.002$ & $[-0.020, +0.007]$ \\
\midrule
\multicolumn{5}{l}{\textbf{Capacity Scaling \& Theoretical Bound}} \\
\midrule
2.4      & MARCO (Large CTR, $5\times$) & None & $+0.092$ & $[+0.067, +0.112]$ \\
2.4      & MARCO (Oracle Router)        & None & $+2.268$ & $[+2.235, +2.302]$ \\
\bottomrule
\end{tabular}
\end{table*}

Table~\ref{tab:postimp} reveals four key findings directly validating our theoretical framework:
\begin{enumerate}[leftmargin=*]
    \item \textbf{$\mathbf{RQ2.1}$ Empirical Weak Dominance:} MARCO strictly outperforms the pooled baseline in post-impression NE across all model capacities and calibration settings in Table~\ref{tab:postimp} (e.g., $+0.054\%$ pre-calibration under Base CTR). This universal, strictly positive lift empirically validates Weak Dominance ($\Delta_{\mathcal{F}} \ge 0$, Theorem~\ref{app:respgate}(i)), confirming that structural intent decomposition expands the hypothesis class without inflating population risk.

    \item \textbf{$\mathbf{RQ2.2}$ Calibration Service Synergy:} Pre-calibration gains jump by over $4\times$ when passing through the online calibration service ($+0.054\%$ to $+0.223\%$ for Base CTR under joint CTR \& CVR calibration). Isolating individual calibration components reveals that CVR-only calibration accounts for the majority of this lift ($+0.181\%$), whereas CTR-only calibration contributes $+0.097\%$. Unmixing per-intent outputs allows CVR calibrators to adjust each intent head against its homogeneous sub-population, resolving group-conditional errors that aggregate calibrators are structurally blind to (Proposition~\ref{prop:observability}).

    \item \textbf{$\mathbf{RQ2.3}$ Estimation Coupling Bias ($\varepsilon_{\mathrm{couple}}$):} The CTR-split-only ablation yields a statistically negligible pre-calibration gain ($+0.002\%$ $[-0.020\%, +0.007\%]$). Because $\sum_k \lambda_k = \lambda$ (Lemma~\ref{prop:bridge}), splitting the CTR head without CVR-side decomposition provides zero CVR headroom while introducing estimation coupling bias $\varepsilon_{\mathrm{couple}}$. This confirms that CVR-side structural intent decomposition is strictly required to capture post-impression gains.
    
    \item \textbf{$\mathbf{RQ2.4}$ Routing Efficiency Expansion ($\hat{\eta}$):} Expanding CTR model capacity from $1\times$ (Base) to $5\times$ (Large) increases the pre-calibration composed NE gain from $+0.054\%$ to $+0.092\%$ in Table~\ref{tab:postimp}, demonstrating that scaling intent prediction capacity directly unlocks higher post-impression accuracy. When intent routing is driven by an Oracle router ($\widehat{\pi} = \pi$), the NE gain reaches $+2.268\%$ $[+2.235\%, +2.302\%]$, establishing the empirical upper bound for realizable theoretical headroom $\Delta_{\mathcal{F}}$. Benchmarking pre-calibration gains against Oracle headroom ($\Delta_{\mathcal{F}} = +2.268\%$) confirms that scaling CTR capacity increases realized routing efficiency $\hat{\eta}$ from $2.38\%$ (Base) to $4.06\%$ (Large). This capacity progression directly validates Proposition~\ref{prop:routing_monotonicity}: expanding CTR capacity enlarges the realizable routing class $\Pi^{(c)}$, systematically reducing the heterogeneity-weighted routing cost $\varepsilon_{\mathrm{route}}$ and raising serving efficiency $\hat{\eta}$ toward $100\%$.
\end{enumerate}

\subsubsection{Pre-Impression Funnel Recall $\mathbf{(RQ3)}$}
\label{sec:exp_pre_imp}



The pre-impression stage governs early-stage candidate retrieval, deciding which ad candidates survive to full auction scoring. We evaluate candidate retrieval fidelity on offline production logs across a 3-day evaluation window by truncating candidate sets at an early-funnel scale of $\mathcal{O}(10^2)$. Retrieval quality is measured using value-weighted recall (the ratio of value delivered by the truncated candidate list to the ideal unconstrained list). MARCO achieves a $+0.38\%$ lift in value-weighted candidate retrieval recall over the pooled baseline. Restoring per-intent score calibration at inference improves rank-order fidelity early in the funnel, ensuring higher-quality candidates advance to final auction scoring.

\subsection{Online Business Impact $\mathbf{(RQ4)}$}
\label{sec:online}



MARCO has been incrementally deployed across multiple ad surfaces, conversion types, and traffic segments on Meta's social media platforms.

\subsubsection{Cumulative Production Impact}
\label{sec:cumulative}

MARCO was deployed through several successive production launches. Each launch was conducted as an independent experiment on a disjoint traffic segment, measured against its own concurrent user-randomized holdback. Every launch yielded positive gains, collectively covering a substantial fraction of Meta's ad traffic. Evaluated globally across all combined launch segments, MARCO delivered a $+0.98\%$ cumulative lift in topline metrics ($95\%$ CI: $[+0.86\%, +1.12\%]$). The confidence interval is a bootstrap percentile interval computed across launches, reflecting performance stability across distinct rollout segments.

\subsubsection{Detailed Holdback Analysis}
\label{sec:holdback}


To evaluate MARCO's production performance under live auction dynamics, we conducted a multi-week, user-randomized holdback experiment on a primary advertising surface, comparing the proposed intent-decomposed composition against the standard single-click baseline. MARCO achieved a $+2.80\%$ lift in conversions per click with a $95\%$ confidence interval of $[+2.50\%, +3.10\%]$, confirming substantial end-to-end performance gains in live production traffic.


The online conversion rate improvement ($+2.80\%$) significantly exceeds the offline impression-space NE gain ($+0.223\%$). This divergence is expected and attributable to two fundamental structural mechanisms:
\begin{enumerate}[leftmargin=*]
    \item \textbf{Probability Space Compression:} Offline NE evaluates the global impression space, where the served score $\Phi = \lambda\mu$ scales on-click conversion rates $\mu$ by the baseline click-through rate $\lambda$. Under squared-loss score headroom analysis (Proposition~\ref{prop:bridge-sq}), impression-level risk gaps scale proportionally to $\lambda^2$. Because click-through rates are small ($\lambda \ll 1$), offline impression-space metrics undergo geometric compression relative to raw on-click conversion gains.
    \item \textbf{Auction Reallocation Dynamics:} Modern real-time ad auctions operate as competitive winner-take-all mechanisms. By eliminating group-conditional miscalibration, MARCO achieves superior score separation across intent strata (Section~\ref{sec:distribution}). This enhanced score fidelity allows high-intent candidate ads to systematically win auctions while suppressing low-converting traffic, dynamically reallocating delivery budget toward higher-converting opportunities and amplifying calibrated scoring gains into non-linear online business lifts.
\end{enumerate}

\subsubsection{Prediction Distribution Analysis}
\label{sec:distribution}



\begin{figure}[t]
\centering
\includegraphics[width=0.5\columnwidth]{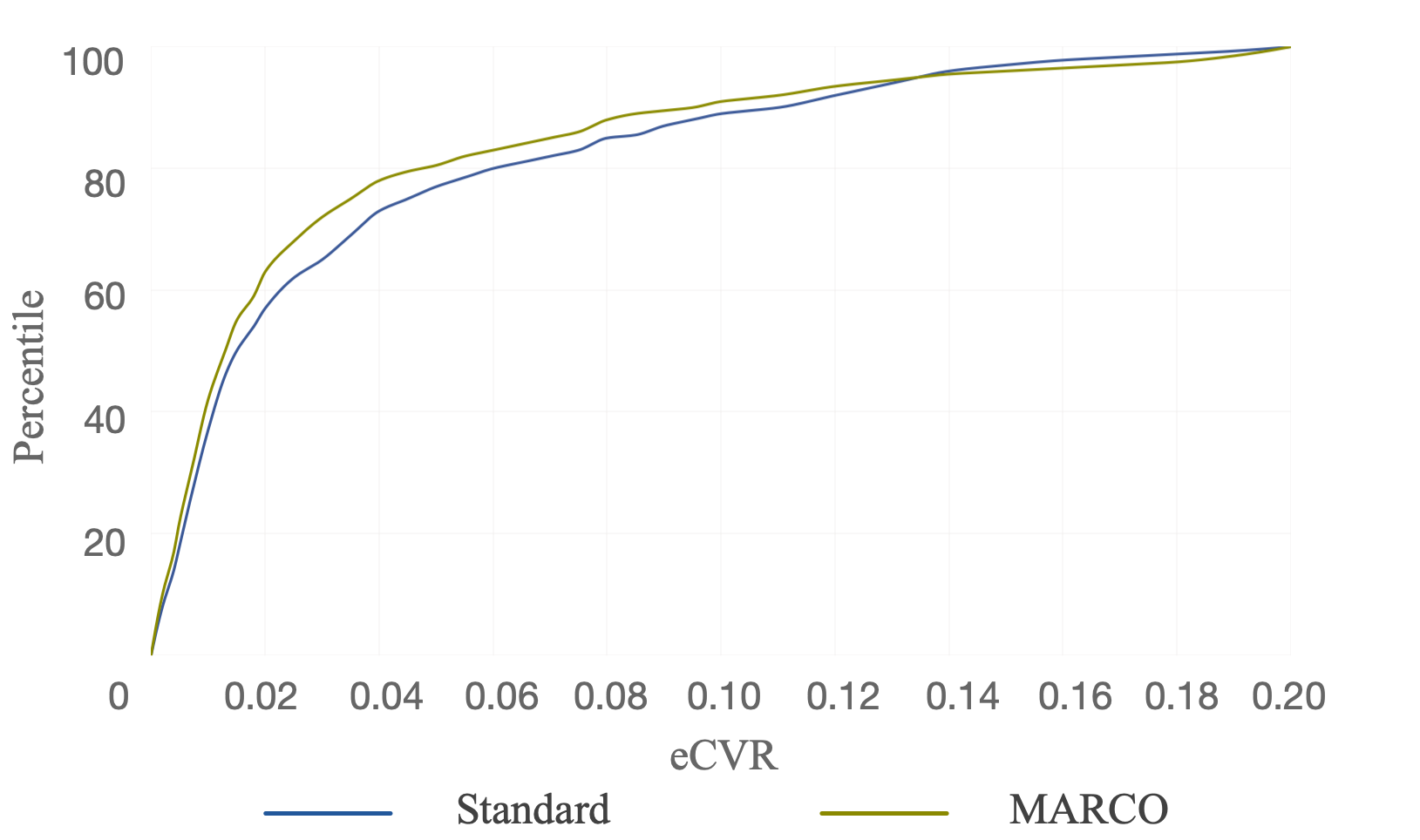}
\caption{Cumulative Distribution Function (CDF) of predicted conversion rates (eCVR) at the pre-impression stage, illustrating score separation under MARCO.}
\label{fig:ecvr}
\end{figure}

Figure~\ref{fig:ecvr} illustrates the mechanism driving online auction gains. For impressions prone to low-intent clicks, MARCO depresses expected conversion scores, causing the treatment CDF to rise faster in the low-eCVR regime ($< 0.04$). Conversely, for impressions prone to high-intent CTA taps, MARCO amplifies scores, shifting the treatment CDF below the baseline in the high-eCVR regime ($> 0.08$). This score separation provides the auction mechanism with high-fidelity conversion signals, systematically suppressing low-converting candidates and winning auctions for high-intent traffic.

\section{Discussion and Conclusion}
\label{sec:discussion}

In this section, we address privacy and ethical considerations, demonstrate the generalizability of intent decomposition to adjacent domains, and outline future technical directions.

\paragraph{Ethical Considerations and Advertiser Alignment.}
MARCO's design adheres strictly to platform privacy standards and ethical AI practices. First, regarding \textbf{data privacy}, MARCO relies exclusively on physical UI interactions already logged in standard interaction pipelines, requiring no new sensitive user profiling or off-platform tracking. Second, to ensure \textbf{statistical fairness across advertisers}, restoring subgroup calibration corrects systematic under-prediction on high-intent traffic and over-prediction on low-intent traffic, eliminating unfair auction penalties for advertisers optimizing for high-converting direct actions. Finally, MARCO remains \textbf{objective-agnostic}, dynamically routing candidate value based on advertiser-defined campaign goals rather than hard-coding a preference for direct sales over brand engagement.

\paragraph{Broad Generalizability and Diagnostic Framework.}
The principle of structural decomposition extends beyond click-intent prediction. Whenever a single supervised target aggregates events with structurally heterogeneous downstream success rates, single-output models inherit subgroup bias masked by aggregate calibration. Representative applications include app install campaigns (pooling organic-attributed and ad-driven installs with distinct 7-day retention rates), e-commerce funnels (merging quick-add interactions from product grids with considered-adds from detailed review pages), and video recommendation platforms (combining lightweight preview plays with intentional full-screen watch sessions).

Industrial practitioners can evaluate whether a target domain warrants intent decomposition prior to model engineering. By evaluating the impression-weighted Kullback-Leibler (KL) divergence spread across candidate interaction strata (following Lemma~\ref{app:excess-ce} and Lemma~\ref{lem:routing_cost}), one can quantify the exact theoretical headroom $\Delta_{\mathcal{F}}$ opened by decomposition directly from historical interaction logs.

\paragraph{Limitations and Future Directions.}
While MARCO achieves substantial production impact at binary intent granularity ($K=2$), theoretical headroom $\Delta_{\mathcal{F}}$ increases monotonically with finer intent partitions. We outline three key vectors for future research:

\begin{enumerate}[leftmargin=*]
    \item \textbf{Taxonomy Scaling (Depth \& Breadth):} Expanding $K$ can proceed along two dimensions: \emph{depth} (partitioning CTA clicks into sub-navigation types) and \emph{breadth} (incorporating post-impression interaction signals such as dwell time, video watch percentage, and organic engagement). Because Weak Dominance holds for arbitrary partitions (Theorem~\ref{app:respgate}(i)), finer taxonomies never degrade population risk.
    \item \textbf{Routing Efficiency Leverage ($\hat{\eta}$):} Realized online gain is governed by routing efficiency $\hat{\eta} = 1 - \varepsilon_{\mathrm{route}} / \Delta_{\mathcal{F}}$. Because routing error concentrates where per-intent conversion rates diverge (Lemma~\ref{lem:routing_cost}), scaling the capacity of the \emph{CTR intent predictor}—rather than the CVR model—is the highest-leverage vector to close the remaining headroom gap.
    \item \textbf{Sparse-Stratum Calibration Infrastructure:} As $K$ increases, interaction volume is divided across finer intent buckets. Rare interaction categories risk calibration variance due to sample sparsity. Developing dynamic calibrators capable of robust real-time convergence under long-tailed intent distributions remains an important system engineering challenge.
\end{enumerate}

Conventional ads ranking architectures conflate distinct user behaviors into a single click label, imposing a structural observability barrier that no model capacity can resolve. MARCO demonstrates that decoupling predictions by intent via zero-cost behavioral supervision restores subgroup calibration, resolves auction score distortions, and delivers substantial topline growth at industrial scale.

\bibliographystyle{unsrt}
\bibliography{references}

\begin{thebibliography}{10}

\bibitem{ma2018esmm}
Xiao Ma, Liqin Zhao, Guan Huang, Zhi Wang, Zelin Hu, Xiaoqiang Zhu, and Kun
  Gai.
\newblock Entire space multi-task model: An effective approach for estimating
  post-click conversion rate.
\newblock In {\em The 41st International ACM SIGIR Conference on Research \&
  Development in Information Retrieval}, SIGIR ’18, pages 1137--1140. ACM,
  2018.

\bibitem{wen2020esm2}
Hong Wen, Jing Zhang, Yuan Wang, Fuyu Lv, Wentian Bao, Quan Lin, and Keping
  Yang.
\newblock Entire space multi-task modeling via post-click behavior
  decomposition for conversion rate prediction.
\newblock In {\em Proceedings of the 43rd International ACM SIGIR Conference on
  Research and Development in Information Retrieval}, SIGIR ’20, pages
  2377--2386. ACM, 2020.

\bibitem{holtsmith1979poststrat}
D.~Holt and T.~M.~F. Smith.
\newblock Post stratification.
\newblock {\em Journal of the Royal Statistical Society: Series A},
  142(1):33--46, 1979.

\bibitem{little1993poststrat}
Roderick J.~A. Little.
\newblock Post-stratification: A modeler's perspective.
\newblock {\em Journal of the American Statistical Association},
  88(423):1001--1012, 1993.

\bibitem{wang2022escm2}
Hao Wang, Tai-Wei Chang, Tianqiao Liu, Jianmin Huang, Zhichao Chen, Chao Yu,
  Ruopeng Li, and Wei Chu.
\newblock Escm2: Entire space counterfactual multi-task model for post-click
  conversion rate estimation.
\newblock In {\em Proceedings of the 45th International ACM SIGIR Conference on
  Research and Development in Information Retrieval}, SIGIR ’22, pages
  363--372. ACM, 2022.

\bibitem{dcmt2023}
Feng Zhu, Mingjie Zhong, Xinxing Yang, Longfei Li, Lu~Yu, Tiehua Zhang, Jun
  Zhou, Chaochao Chen, Fei Wu, Guanfeng Liu, and Yan Wang.
\newblock Dcmt: A direct entire-space causal multi-task framework for
  post-click conversion estimation.
\newblock In {\em 2023 IEEE 39th International Conference on Data Engineering
  (ICDE)}, pages 3113--3125. IEEE, 2023.

\bibitem{xi2021aitm}
Dongbo Xi, Zhen Chen, Peng Yan, Yinger Zhang, Yongchun Zhu, Fuzhen Zhuang, and
  Yu~Chen.
\newblock Modeling the sequential dependence among audience multi-step
  conversions with multi-task learning in targeted display advertising.
\newblock In {\em Proceedings of the 27th ACM SIGKDD Conference on Knowledge
  Discovery \& Data Mining}, KDD ’21, pages 3745--3755. ACM, 2021.

\bibitem{ma2018mmoe}
Jiaqi Ma, Zhe Zhao, Xinyang Yi, Jilin Chen, Lichan Hong, and Ed~H. Chi.
\newblock Modeling task relationships in multi-task learning with multi-gate
  mixture-of-experts.
\newblock In {\em Proceedings of the 24th ACM SIGKDD International Conference
  on Knowledge Discovery \& Data Mining}, KDD ’18, pages 1930--1939. ACM,
  2018.

\bibitem{tang2020ple}
Hongyan Tang, Junning Liu, Ming Zhao, and Xudong Gong.
\newblock Progressive layered extraction (ple): A novel multi-task learning
  (mtl) model for personalized recommendations.
\newblock In {\em Fourteenth ACM Conference on Recommender Systems}, RecSys
  ’20, pages 269--278. ACM, 2020.

\bibitem{chang2023pepnet}
Jianxin Chang, Chenbin Zhang, Yiqun Hui, Dewei Leng, Yanan Niu, Yang Song, and
  Kun Gai.
\newblock Pepnet: Parameter and embedding personalized network for infusing
  with personalized prior information.
\newblock In {\em Proceedings of the 29th ACM SIGKDD Conference on Knowledge
  Discovery and Data Mining}, KDD ’23, pages 3795--3804. ACM, 2023.

\bibitem{platt1999probabilistic}
John~C. Platt.
\newblock Probabilistic outputs for support vector machines and comparisons to
  regularized likelihood methods.
\newblock In Alexander~J. Smola, Peter Bartlett, Bernhard Sch{\"o}lkopf, and
  Dale Schuurmans, editors, {\em Advances in Large Margin Classifiers}, pages
  61--74. MIT Press, 1999.

\bibitem{zadrozny2002transforming}
Bianca Zadrozny and Charles Elkan.
\newblock Transforming classifier scores into accurate multiclass probability
  estimates.
\newblock In {\em Proceedings of the eighth ACM SIGKDD international conference
  on Knowledge discovery and data mining}, KDD02, pages 694--699. ACM, 2002.

\bibitem{calmatters2023}
Yewen Fan, Nian Si, and Kun Zhang.
\newblock Calibration matters: Tackling maximization bias in large-scale
  advertising recommendation systems.
\newblock In {\em The Eleventh International Conference on Learning
  Representations}, 2023.

\bibitem{beyondpartitions2023}
Jia-Qi Yang, De-Chuan Zhan, and Le~Gan.
\newblock Beyond probability partitions: Calibrating neural networks with
  semantic aware grouping.
\newblock In {\em Advances in Neural Information Processing Systems 36},
  NeurIPS 2023, pages 58448--58460. Neural Information Processing Systems
  Foundation, Inc. (NeurIPS), 2023.

\bibitem{netflix2024intent}
Sejoon Oh, Moumita Bhattacharya, Yesu Feng, and Sudarshan Lamkhede.
\newblock {IntentRec}: Predicting user session intent with hierarchical
  multi-task learning.
\newblock {\em arXiv preprint arXiv:2408.05353}, 2024.

\bibitem{luo2026lattice}
Liang Luo, Yuxin Chen, Zhengyu Zhang, Mengyue Hang, Andrew Gu, Buyun Zhang,
  Boyang Liu, Chen Chen, Fan Yang, Feifan Gu, Huayu Li, Jade Nie, Jiayi Xu,
  Jiyan Yang, Jongsoo Park, Laming Chen, Longhao Jin, Qin Huang, Shali Jiang,
  Shiwen Shen, Shuaiwen Wang, Siyang Yuan, Tongyi Tang, Weilin Zhang, Xi~Liu,
  Xiaohan Wei, Yuchen Hao, Xiaozhen Xia, Yasmine Badr, Zeliang Chen, Chengze
  Fan, Dong Liang, Qianru Li, Sihan Zeng, Wenjun Wang, Yunlong He, Yinbin Ma,
  Maxim Naumov, Yantao Yao, Wenlin Chen, and Ellie~Dingqiao Wen.
\newblock Meta lattice: Model space redesign for cost-effective industry-scale
  ads recommendations.
\newblock In {\em Proceedings of the 32nd ACM SIGKDD Conference on Knowledge
  Discovery and Data Mining V.1}, KDD '26, pages 2335--2346. Association for
  Computing Machinery, 2026.

\bibitem{ipcf}
Mahanth~Kumar Beeraka, Chen Chen, Yining Lu, Briac Marcatte, Weikun Lyu, Brooke
  Bian, Enriko Aryanto, Ellie Wen, Mohamed~A. Radwan, Tianshan Cui, Wenjing Lu,
  Mohsen Malmir, and Yang Li.
\newblock Closing the online-offline gap: A scalable framework for composed
  model evaluation.
\newblock In {\em Proceedings of the Nineteenth ACM Conference on Recommender
  Systems}, RecSys ’25, pages 923--926. ACM, 2025.

\bibitem{luenberger1969}
David~G Luenberger.
\newblock {\em Optimization by vector space methods}.
\newblock John Wiley \& Sons, Nashville, TN, January 1997.

\bibitem{banerjee2005bregman}
A.~Banerjee, X.~Guo, and H.~Wang.
\newblock On the optimality of conditional expectation as a bregman predictor.
\newblock {\em IEEE Transactions on Information Theory}, 51(7):2664–2669,
  2005.

\bibitem{gneiting2007scoring}
Tilmann Gneiting and Adrian~E Raftery.
\newblock Strictly proper scoring rules, prediction, and estimation.
\newblock {\em Journal of the American Statistical Association},
  102(477):359–378, March 2007.

\bibitem{dawid1982calibration}
A.~Philip Dawid.
\newblock The well-calibrated bayesian.
\newblock {\em Journal of the American Statistical Association},
  77(379):605--610, 1982.

\bibitem{mityagin2015zero}
Boris Mityagin.
\newblock The zero set of a real analytic function.
\newblock {\em arXiv preprint arXiv:1512.07276}, 2015.

\bibitem{minka2000bma}
Thomas~P. Minka.
\newblock Bayesian model averaging is not model combination.
\newblock Technical report, MIT Media Lab, 2000.
\newblock Technical note.

\bibitem{monteith2011bmc}
Kristine Monteith, James~L. Carroll, Kevin Seppi, and Tony Martinez.
\newblock Turning {B}ayesian model averaging into {B}ayesian model combination.
\newblock In {\em The 2011 International Joint Conference on Neural Networks
  (IJCNN)}, pages 2657--2663, 2011.

\bibitem{hornik1989universal}
Kurt Hornik, Maxwell Stinchcombe, and Halbert White.
\newblock Multilayer feedforward networks are universal approximators.
\newblock {\em Neural Networks}, 2(5):359–366, January 1989.

\end{thebibliography}

\pagebreak
\appendix
\section{Derivation of the Headroom and Weak Dominance}\label{sec:appendix}

This appendix gives the full statements and proofs behind the theory section. We prove the observability barrier, the Bayes equivalence, the excess-risk identity, weak dominance in any function class, the exact squared-loss headroom, the genericity of strict improvement, the score bridge, and the signed decomposition of the realized gain. The squared-loss results carry a closed form. The normalized-entropy results keep only the loss-free parts, for the reason given in the mixture-of-experts lemma (Lemma~\ref{prop:moe}).

\subsection{Setup and notation}

We condition on a click $C=1$ unless the impression level is named. Let $X$ be impression-time features with impression distribution $\rho$. Let $k\in\{1,\dots,K\}$ index the click intent over $K$ disjoint categories, observed at training from the logged click type. Let $Y\in\{0,1\}$ be the conversion. Define
\begin{equation}
\begin{aligned}
\mu_k&=\mathbb{E}[Y\mid X{=}x,K{=}k],\quad
\pi_k=P(K{=}k\mid X{=}x,C{=}1),\\
\mu&=\mathbb{E}[Y\mid X{=}x,C{=}1].
\end{aligned}
\end{equation}
The law of total probability gives the exact identity $\mu=\sum_{k=1}^K \pi_k\mu_k$. This identity assumes no homogeneity of clicks. Let $\lambda=P(C{=}1\mid X{=}x)$ be the click-through rate and $\lambda_k=\lambda\pi_k$ the per-intent click rate, so $\sum_k\lambda_k=\lambda$ and $\pi_k=\lambda_k/\lambda$.

Let $\mathcal{F}$ be the hypothesis class of a single pooled conversion head. The intent-decomposed class is
\begin{equation}\label{app:eq:Gclass}
\mathcal{G}=\Big\{\, g=\sum_{k=1}^K \pi_k f_k:\ f_k\in\mathcal{F}\,\Big\}.
\end{equation}
MARCO serves the impression-level score $\Phi_{\mathrm{MARCO}}=\sum_k \lambda_k\mu_k$. The standard model serves $\Phi_{\mathrm{std}}=\lambda\mu$.

\subsection{Bayes equivalence}

\begin{lemma}[Bayes equivalence]\label{prop:bayes_equivalence}
At the true conditionals $\Phi_{\mathrm{MARCO}}=\Phi_{\mathrm{std}}$.
\end{lemma}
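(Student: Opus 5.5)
The plan is to expand $\Phi_{\mathrm{MARCO}}$ using the definitions of the appendix setup and collapse the sum with the law of total probability. At the true conditionals, each per-intent click rate factors as $\lambda_k=\lambda\pi_k$ with $\lambda=P(C{=}1\mid X{=}x)$ and $\pi_k=P(K{=}k\mid X{=}x,C{=}1)$. This is just the chain rule $P(C{=}1,K{=}k\mid x)=P(C{=}1\mid x)\,P(K{=}k\mid x,C{=}1)$, using that the $K$ intent categories are disjoint and exhaustive subsets of the click event. Substituting gives
\begin{equation*}
\Phi_{\mathrm{MARCO}}(x)=\sum_{k=1}^K\lambda_k(x)\mu_k(x)=\lambda(x)\sum_{k=1}^K\pi_k(x)\mu_k(x).
\end{equation*}

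The second step is to identify the inner sum with the pooled CVR. Since $\mu_k=\mathbb{E}[Y\mid X{=}x,K{=}k]$, and the event $\{K{=}k\}$ implies $C{=}1$, this equals $\mathbb{E}[Y\mid X{=}x,C{=}1,K{=}k]$. The tower property over the partition $\{K{=}k\}_{k=1}^K$ of $\{C{=}1\}$ then yields
\begin{equation*}
\sum_{k=1}^K\pi_k\mu_k=\mathbb{E}[Y\mid X{=}x,C{=}1]=\mu.
\end{equation*}
This is exactly the identity $\mu=\sum_k\pi_k\mu_k$ recorded in the setup. Hence $\Phi_{\mathrm{MARCO}}=\lambda\mu=\Phi_{\mathrm{std}}$, and both equal $P(\text{conv}\mid\text{imp},x)$.

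The only point needing care, and the likely obstacle, is well-definedness on null events. If $\lambda(x)=0$, both scores are zero regardless of how $\pi_k,\mu,\mu_k$ are defined. If $\pi_k(x)=0$ for some $k$, then $\mu_k(x)$ is an arbitrary version, but it enters only multiplied by $\lambda_k=0$. So I would state the equality $\rho$-almost surely, fixing any versions of the conditional expectations. I would also note that no homogeneity or capacity assumption is used, which is what makes this a pure identity at the Bayes optimum rather than a claim about fitted models.
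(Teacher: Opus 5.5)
Your proof is correct and follows the paper's argument: substitute $\lambda_k=\lambda\pi_k$, factor out $\lambda$, and collapse $\sum_k\pi_k\mu_k$ to $\mu$ by the law of total probability, giving $\lambda\mu=\Phi_{\mathrm{std}}$. Your derivation of the two setup identities (chain rule, tower property) and your remark on null events add care the paper leaves implicit, without changing the argument.
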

\begin{proof}
Use $\lambda_k=\lambda\pi_k$ and $\sum_k\pi_k=1$. Then $\Phi_{\mathrm{MARCO}}=\sum_k\lambda\pi_k\mu_k=\lambda\sum_k\pi_k\mu_k=\lambda\mu=\Phi_{\mathrm{std}}$.
\end{proof}

At the population level the two scores coincide. The Bayes-optimal impression score is unchanged by the decomposition. Any deployed gain is therefore a finite-capacity estimation and calibration effect. The click-rate decomposition adds no independent factor. It supplies the routing $\pi_k=\lambda_k/\lambda$ and the total $\lambda=\sum_k\lambda_k$. The served conversion factor is $\sum_k\pi_k\mu_k$, an element of $\mathcal{G}$ whose routing is produced by the click-rate head. The analysis reduces to the single conversion head of \eqref{app:eq:Gclass}.

\subsection{Excess risk}

We first record the squared-loss identity, then the cross-entropy identity.

Squared loss uses the risk $\mathcal{R}(f)=\mathbb{E}[(Y-f)^2]$. Write $\langle\cdot,\cdot\rangle$ and $\lVert\cdot\rVert$ for the $L^2(\rho)$ inner product and norm. he identity below is the standard $L^2(\rho)$ projection and bias-variance decomposition~\cite{luenberger1969}.

\begin{lemma}[Squared-loss excess risk]\label{app:excess-sq}
For any $f$, $\mathcal{R}(f)=\mathcal{R}(\mu)+\lVert f-\mu\rVert^2$. The infimum of $\mathcal{R}$ over a closed set is attained at the $L^2(\rho)$ projection of $\mu$ onto that set.
\end{lemma}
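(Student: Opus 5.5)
The plan is the standard orthogonal decomposition in $L^2$. Here $\mu$ is the regression function of $Y$ on the conditioning variables, so $\mu=\mathbb{E}[Y\mid X]$ given $C=1$. The first step is to write $Y-f=(Y-\mu)+(\mu-f)$ and expand the square:
\begin{equation*}
\mathcal{R}(f)=\mathbb{E}[(Y-\mu)^2]+2\,\mathbb{E}[(Y-\mu)(\mu-f)]+\mathbb{E}[(\mu-f)^2].
\end{equation*}
The first term is $\mathcal{R}(\mu)$ by definition. The last term is $\lVert f-\mu\rVert^2$ in $L^2(\rho)$, since both $f$ and $\mu$ are functions of $X$ only.

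The key step is to show that the cross term vanishes. I will condition on $X$ and use the tower property. Because $\mu-f$ is $X$-measurable, it can be pulled out of the inner expectation:
\begin{equation*}
\mathbb{E}[(Y-\mu)(\mu-f)]=\mathbb{E}\big[(\mu-f)\,\mathbb{E}[Y-\mu\mid X]\big].
\end{equation*}
The inner factor is zero by the definition of $\mu$. Integrability requires $f\in L^2(\rho)$, and I will assume this; it holds automatically for probability-valued heads, which are bounded in $[0,1]$. I expect this measurability point to be the only real obstacle. The decomposed predictors $g=\sum_k\pi_k f_k$ are also functions of $x$ alone, since the $\pi_k$ are impression-time quantities. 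The identity therefore applies to every element of $\mathcal{F}$ and of $\mathcal{G}$, even though $\mathcal{G}$ is defined through intent-level objects.

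For the second claim, the identity shows that $\mathcal{R}(f)$ differs from $\lVert f-\mu\rVert^2$ only by a constant that does not depend on $f$. Minimizing $\mathcal{R}$ over a set $S\subseteq L^2(\rho)$ is therefore the same as minimizing the distance from $\mu$ to $S$. I will take $S$ to be closed and convex, which covers the usage in the paper, where $\mathcal{F}$ is a closed subspace or closed convex class and $\mathcal{G}$ inherits this. Under that hypothesis, the Hilbert projection theorem~\cite{luenberger1969} gives a unique minimizer $P_S\mu$, and it is characterized by $\langle \mu-P_S\mu,\,h-P_S\mu\rangle\le 0$ for all $h\in S$. For a merely closed nonconvex set, attainment can fail in infinite dimensions. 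In that case I will either note compactness, which holds for finite-dimensional parametrized classes, or state the result as the infimum equals $\mathcal{R}(\mu)+\operatorname{dist}(\mu,S)^2$ and read "projection" as a nearest point whenever one exists.
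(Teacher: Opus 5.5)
Your proof is correct and follows the paper's argument: expand $Y-f=(Y-\mu)+(\mu-f)$, eliminate the cross term using $\mathbb{E}[Y-\mu\mid X]=0$, and read off $\mathcal{R}(\mu)+\lVert f-\mu\rVert^2$, so that minimizing $\mathcal{R}$ amounts to minimizing distance to $\mu$. Your caveat on attainment sharpens the paper's ``closed set'' phrasing, which is only safe for closed convex sets or closed sets lying in a finite-dimensional subspace (the paper's actual use); note, though, that a finite-dimensional \emph{parametrization} does not by itself give compactness, so the condition you need is the finite-dimensional-subspace one, or else your fallback $\inf_S\mathcal{R}=\mathcal{R}(\mu)+\mathrm{dist}(\mu,S)^2$.
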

\begin{proof}
Expand
\begin{equation}
\begin{aligned}
\mathcal{R}(f)={}&\mathbb{E}[(Y-\mu)^2]\\
&+2\,\mathbb{E}[(Y-\mu)(\mu-f)]\\
&+\mathbb{E}[(\mu-f)^2].
\end{aligned}
\end{equation}
The cross term vanishes because $\mathbb{E}[Y-\mu\mid X]=0$. The first term is $\mathcal{R}(\mu)$. The last term is $\lVert f-\mu\rVert^2$. Minimizing $\lVert f-\mu\rVert^2$ over a closed set is projection.
\end{proof}

Cross-entropy uses the risk $\mathcal{R}(f)=\mathbb{E}[-Y\log f-(1-Y)\log(1-f)]$ for $f$ valued in $(0,1)$. Normalized entropy is $\mathrm{NE}(f)=\mathcal{R}(f)/H(\bar y)$ with $\bar y=\mathbb{E}[Y]$ and $H$ the binary entropy. $H(\bar y)$ is a fixed positive constant, so NE and cross-entropy share minimizers and orderings. Every statement below applies to both.

\begin{lemma}[Cross-entropy excess-risk identity]\label{app:excess-ce}
For any $(0,1)$-valued $f$,
\begin{equation}
\mathcal{R}(f)=\mathcal{R}(\mu)+\mathbb{E}_X\big[\mathrm{KL}(\mathrm{Bern}(\mu)\,\Vert\,\mathrm{Bern}(f))\big].
\end{equation}
Hence $\mathcal{R}(f)\ge\mathcal{R}(\mu)$ with equality iff $f=\mu$ almost everywhere.
\end{lemma}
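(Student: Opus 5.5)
The plan is to prove the identity pointwise in $x$, by conditioning on $X$, and then integrate against $\rho$. This mirrors the proof of Lemma~\ref{app:excess-sq}, with the cross-term cancellation replaced by the tower property applied to a loss that is linear in $Y$.

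\textbf{Step 1: reduce to a conditional risk.} Fix $x$ and write $p=\mu(x)$ and $q=f(x)\in(0,1)$. Because the log-loss is affine in $Y$, the tower property gives
\[
\mathbb{E}\big[-Y\log f-(1-Y)\log(1-f)\,\big|\,X=x\big]=-p\log q-(1-p)\log(1-q).
\]
This uses only $\mathbb{E}[Y\mid X=x,C=1]=\mu(x)$ and the fact that $f$ is a function of $X$ alone.

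\textbf{Step 2: split off the entropy.} Add and subtract $-p\log p-(1-p)\log(1-p)=H(p)$. The conditional risk then becomes
\[
H(p)+p\log\frac{p}{q}+(1-p)\log\frac{1-p}{1-q}=H(p)+\mathrm{KL}(\mathrm{Bern}(p)\,\Vert\,\mathrm{Bern}(q)).
\]
Applying Step 1 with $f=\mu$ shows that $H(\mu(x))$ is exactly the conditional risk of $\mu$. Taking $\mathbb{E}_X$ of both sides yields the stated identity.

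The only integrability issue arises if $\mathcal{R}(\mu)$ is infinite. This cannot happen, since $H\le\log 2$. If $\mathcal{R}(f)=\infty$, both sides are infinite, so the identity holds in the extended reals. The boundary values $p\in\{0,1\}$ are handled by the convention $0\log 0=0$.

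\textbf{Step 3: the inequality and the equality case.} By Gibbs' inequality, or equivalently Jensen applied to the concave $\log$, the Bernoulli KL is non-negative and vanishes iff $q=p$. This immediately gives $\mathcal{R}(f)\ge\mathcal{R}(\mu)$.

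For the ``iff'' direction, a non-negative function whose $\rho$-expectation is zero must vanish $\rho$-almost everywhere. Hence $\mathrm{KL}=0$ a.e., and therefore $f=\mu$ a.e.

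The only delicate point is this last equality case, specifically the requirement that $f$ be $(0,1)$-valued. One might worry that $\mu$ itself can reach $0$ or $1$ on a positive-measure set, in which case $\mu$ would not be an admissible $(0,1)$-valued $f$. I would handle this by reading $\mathcal{R}(\mu)$ with the convention $0\log 0=0$, which keeps it finite and still equal to $\mathbb{E}[H(\mu)]$. With that reading, the identity and the equality characterization hold exactly as stated.
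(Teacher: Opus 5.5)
Your proof is correct and is essentially the paper's argument. The paper writes the log-loss as the Bregman divergence $D_\psi(Y,f)$ of $\psi(p)=p\log p+(1-p)\log(1-p)$, uses linearity in $Y$ under $\mathbb{E}[\cdot\mid X]$, identifies the difference between $f$ and $\mu$ as $D_\psi(\mu,f)=\mathrm{KL}(\mathrm{Bern}(\mu)\,\Vert\,\mathrm{Bern}(f))$, and uses strict convexity of $\psi$ where you use Gibbs' inequality, so your explicit entropy-plus-KL computation is the same proof unpacked; your handling of integrability and of $\mu\in\{0,1\}$, reading $\mathcal{R}(\mu)$ with $0\log 0=0$, is a small refinement the paper leaves implicit.
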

\begin{proof}
Cross-entropy is the Bregman divergence generated by $\psi(p)=p\log p+(1-p)\log(1-p)$~\cite{banerjee2005bregman,gneiting2007scoring}. Write $D_\psi(y,f)=\psi(y)-\psi(f)-\psi'(f)(y-f)$. Then $\mathbb{E}[D_\psi(Y,f)\mid X]=\mathbb{E}[\psi(Y)\mid X]-\psi(f)-\psi'(f)(\mu-f)$ because $\mathbb{E}[Y\mid X]=\mu$. Subtracting the same expression at $f=\mu$ leaves $\mathrm{KL}(\mathrm{Bern}(\mu)\,\Vert\,\mathrm{Bern}(f))$. Take the expectation over $X$. Nonnegativity and the equality case follow from strict convexity of $\psi$.
\end{proof}

The excess risk under cross-entropy is an averaged Kullback-Leibler divergence between the true per-impression conversion rate and the prediction. Under the classical calibration--refinement decomposition of a proper scoring rule~\cite{dawid1982calibration}, this gap splits into a calibration term, which measures how far predictions sit from the conditional conversion rate they are grouped with, and a refinement term, which measures the residual heterogeneity within those groups. A per-intent calibration step drives the calibration term to zero within each intent, while decomposition attacks the refinement term by splitting the pooled group into homogeneous intent slices. The log-loss gap and per-intent calibration are thus the same object measured on each intent slice.

\subsection{Observability barrier}

\begin{proof}[Proof of Proposition~\ref{prop:observability}]
Fix $x$ and condition on a click. Being calibrated on intent category $k$ means the emitted value equals that category's conditional mean, $\mu=\mu_k$. Simultaneous calibration on all $K$ categories thus requires $\mu=\mu_k$ for every $k$, which is impossible whenever the $\mu_k$ are not all equal, since a single scalar equals at most one element of a set of distinct reals; the same fact gives the at-most-one claim. Under a strictly proper scoring rule the risk-minimizing single output is the pooled conditional mean $\mathbb{E}[Y\mid X{=}x,C{=}1]=\sum_k\pi_k\mu_k$ (Lemmas~\ref{app:excess-sq} and~\ref{app:excess-ce}), which is aggregate-calibrated yet, as a convex combination of the $\mu_k$, coincides with at most one of them.
\end{proof}

\subsection{Inclusion}

\begin{lemma}[Inclusion]\label{app:inclusion}
$\mathcal{F}\subseteq\mathcal{G}$.
\end{lemma}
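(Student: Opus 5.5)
The plan is to exhibit, for each $f\in\mathcal{F}$, an explicit element of $\mathcal{G}$ that equals it. The natural choice is to set every per-intent head equal to the pooled head: take $f_k=f$ for all $k\in\{1,\dots,K\}$. Since each $f_k$ lies in $\mathcal{F}$, the function $g=\sum_{k=1}^K \pi_k f_k$ belongs to $\mathcal{G}$ by the definition in \eqref{app:eq:Gclass}.

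It remains to check that $g=f$ pointwise in $x$. This holds because
\[
g=\sum_{k=1}^K \pi_k f=f\sum_{k=1}^K \pi_k = f .
\]
The last step uses $\sum_k \pi_k(x)=1$. This in turn follows because the $K$ intent categories are mutually exclusive and collectively exhaustive over clicks, so $\pi(\cdot\mid x)$ is a probability distribution on $\{1,\dots,K\}$. The identity is the same one used in the proof of Lemma~\ref{prop:bayes_equivalence}.

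The only point needing care, and the closest thing to an obstacle, is the meaning of this identity on impressions where $\lambda(x)=0$. There $\pi_k=\lambda_k/\lambda$ is undefined, although $\pi_k=P(K{=}k\mid X{=}x,C{=}1)$ is still defined $\rho$-almost everywhere on the click-conditioned distribution. I would handle this by noting that all risks in this appendix are computed conditional on $C=1$. Equality $g=f$ therefore only needs to hold $\rho$-a.e. on the click population, where $\pi$ is a genuine conditional distribution. Alternatively, one can fix any version of the conditional probability, which still sums to one by construction.

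A secondary remark is that the construction uses no closure property of $\mathcal{F}$ (no convexity or scaling). It only requires that the same $\mathcal{F}$ is allowed for every head. The inclusion therefore holds for an arbitrary hypothesis class, which is what Theorem~\ref{app:respgate}(i) needs to conclude $\Delta_{\mathcal{F}}\ge 0$.
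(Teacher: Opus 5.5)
Your proof is correct and matches the paper's: set $f_k\equiv f$ for every $k$ and use $\sum_k\pi_k=1$ to get $\sum_k\pi_k f=f$, which uses no structure on $\mathcal{F}$. Your added remark about impressions with $\lambda(x)=0$ is a harmless refinement that the paper leaves implicit; there the identity only needs to hold $\rho$-a.e. on the click population.
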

\begin{proof}
Given $f\in\mathcal{F}$, set $f_k\equiv f$ for all $k$ in \eqref{app:eq:Gclass}. Then $\sum_k\pi_k f=f$ because $\sum_k\pi_k=1$. This is set-theoretic and uses neither the loss nor any structure on $\mathcal{F}$.
\end{proof}

Inclusion gives the headroom used throughout. Define
\begin{equation}\label{app:eq:headroom}
\Delta_{\mathcal{F}}=\inf_{f\in\mathcal{F}}\mathcal{R}(f)-\inf_{g\in\mathcal{G}}\mathcal{R}(g),
\end{equation}
the risk gap between the best-in-class pooled and decomposed models. Weak dominance ($\Delta_{\mathcal{F}}\ge0$) is the special case of the responsibility-gated dominance theorem (Theorem~\ref{app:respgate}(i)) at the supervised gate $\pi_k=P(K{=}k\mid X{=}x,C{=}1)$: since $\mathcal{F}\subseteq\mathcal{G}$, the infimum over the superset can only decrease. It holds for squared loss and for cross-entropy, needs no linear structure and no capacity assumption, so the decomposed objective is never worse than the pooled objective in any function class. This is why the improvement is structural rather than a capacity effect.

\subsection{Exact squared-loss headroom}

Take $\mathcal{F}$ to be a finite-dimensional linear subspace of $L^2(\rho)$ with basis $\{e_1,\dots,e_D\}$. This is the finite-capacity model for a single scalar head.

\begin{lemma}[$\mathcal{G}$ is a subspace]\label{app:subspace}
$\mathcal{G}=\mathrm{span}\{\pi_k e_j:\ 1\le k\le K,\ 1\le j\le D\}$, a subspace of dimension at most $KD$, and $\mathcal{F}\subseteq\mathcal{G}$. Moreover $\mathcal{G}=\mathcal{F}$ iff $\pi_k e_j\in\mathcal{F}$ for all $k$ and $j$. Equivalently $\mathcal{G}\supsetneq\mathcal{F}$ iff there exist $k$ and $j$ with $\pi_k e_j\notin\mathcal{F}$.
\end{lemma}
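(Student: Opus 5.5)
The plan is to prove the two set inclusions that identify $\mathcal{G}$ with the span, and then read off the remaining claims from that description. For the inclusion $\mathcal{G}\subseteq\mathrm{span}\{\pi_k e_j\}$, take $g=\sum_k \pi_k f_k$ with each $f_k\in\mathcal{F}$. Expand each $f_k=\sum_j c_{kj}e_j$ in the basis, so that $g=\sum_{k,j}c_{kj}\,\pi_k e_j$. For the reverse inclusion, an arbitrary element $\sum_{k,j}c_{kj}\pi_k e_j$ regroups as $\sum_k\pi_k\big(\sum_j c_{kj}e_j\big)$. The inner sums lie in $\mathcal{F}$, so this element belongs to $\mathcal{G}$ by \eqref{app:eq:Gclass}.

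Before doing this, I will check that the products $\pi_k e_j$ really lie in $L^2(\rho)$. This holds because $0\le\pi_k\le 1$ pointwise, so $|\pi_k e_j|\le|e_j|$. With that in place, $\mathcal{G}$ is a genuine linear subspace of $L^2(\rho)$. It is spanned by $KD$ vectors, so its dimension is at most $KD$. Being finite-dimensional, it is also closed, which is what Lemma~\ref{app:excess-sq} needs for projection onto it. The inclusion $\mathcal{F}\subseteq\mathcal{G}$ is exactly Lemma~\ref{app:inclusion}. It can also be seen directly: $e_j=\sum_k\pi_k e_j$ because $\sum_k\pi_k=1$.

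For the characterization of equality, I will argue both directions. If $\mathcal{G}=\mathcal{F}$, then every generator $\pi_k e_j$, being an element of $\mathcal{G}$, lies in $\mathcal{F}$. Conversely, if every $\pi_k e_j\in\mathcal{F}$, then their span lies in $\mathcal{F}$ because $\mathcal{F}$ is a subspace, so $\mathcal{G}\subseteq\mathcal{F}$. Combined with $\mathcal{F}\subseteq\mathcal{G}$, this gives equality. The strict-containment statement is the contrapositive, using $\mathcal{F}\subseteq\mathcal{G}$ once more.

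No step here is a real obstacle. The only point requiring care is the well-definedness issue above, namely that multiplication by $\pi_k$ keeps us inside $L^2(\rho)$, which the boundedness of $\pi_k$ settles. I will also note that the dimension bound is only an upper bound: the generators may be linearly dependent, as happens for instance when $\pi_k$ is constant.
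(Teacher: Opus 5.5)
Your proof is correct and follows the paper's argument: expand each $f_k$ in the basis to identify $\mathcal{G}$ with the span of the $\pi_k e_j$, use the inclusion lemma for $\mathcal{F}\subseteq\mathcal{G}$, and note that $\mathcal{G}=\mathcal{F}$ exactly when every generator lies in $\mathcal{F}$. Your extra checks — that $\pi_k e_j\in L^2(\rho)$ because $0\le\pi_k\le 1$, and that $\mathcal{G}$ is closed because it is finite-dimensional — are details the paper leaves implicit, and they are welcome.
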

\begin{proof}
Write $f_k=\sum_j a_{kj}e_j$. Then $g=\sum_{k,j}a_{kj}(\pi_k e_j)$, so $\mathcal{G}=\mathrm{span}\{\pi_k e_j\}$. This is a subspace. The inclusion lemma gives $\mathcal{F}\subseteq\mathcal{G}$. If every $\pi_k e_j\in\mathcal{F}$ then $\mathcal{G}\subseteq\mathcal{F}$, so $\mathcal{G}=\mathcal{F}$. Conversely $\mathcal{G}=\mathcal{F}$ forces each generator $\pi_k e_j\in\mathcal{F}$.
\end{proof}

\begin{proposition}[Non-constant routing is necessary but not sufficient]\label{prop:iff}
Constant $\pi_k$ gives $\pi_k e_j\in\mathcal{F}$, so non-constant routing is necessary for $\mathcal{G}\supsetneq\mathcal{F}$. It is not sufficient. Let $\rho$ have atoms $x_1,x_2,x_3$ and let $\mathcal{F}=\mathrm{span}\{\mathbf 1_{x_1},\mathbf 1_{x_2}\}$. Let $\pi_1$ take distinct values on $x_1$ and $x_2$ and let $\pi_2=1-\pi_1$. Multiplication by $\pi_k$ is diagonal, so $\pi_k e_j\in\mathcal{F}$ and $\mathcal{G}=\mathcal{F}$. The correct condition is the one in the subspace lemma.
\end{proposition}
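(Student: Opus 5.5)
The proposition has two parts, and I would prove them in order. First, constant routing forces $\mathcal{G}=\mathcal{F}$, so non-constant routing is necessary. Second, an explicit three-atom example shows non-constant routing is not sufficient. The criterion I rely on throughout is the subspace lemma (Lemma~\ref{app:subspace}): $\mathcal{G}\supsetneq\mathcal{F}$ iff some generator $\pi_k e_j$ lies outside $\mathcal{F}$.

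\emph{Necessity.} Suppose each $\pi_k\equiv c_k$ is constant in $x$. Then $\pi_k e_j=c_k e_j$ is a scalar multiple of a basis element of the linear subspace $\mathcal{F}$, so it lies in $\mathcal{F}$. By Lemma~\ref{app:subspace} this gives $\mathcal{G}=\mathcal{F}$. Contrapositively, a strict enlargement $\mathcal{G}\supsetneq\mathcal{F}$ requires some $\pi_k$ to be non-constant ($\rho$-a.e.).

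\emph{Non-sufficiency.} I would use the stated counterexample and check three things.
\begin{enumerate}
\item In $L^2(\rho)$ with $\rho$ supported on atoms $x_1,x_2,x_3$, every function is a vector in $\mathbb{R}^3$ (assuming all atoms have positive mass, so functions differing only on null sets are identified).
\item Multiplication by $\pi_k$ acts as the diagonal operator $\mathrm{diag}(\pi_k(x_1),\pi_k(x_2),\pi_k(x_3))$. Hence $\pi_k\mathbf 1_{x_i}=\pi_k(x_i)\mathbf 1_{x_i}$ for $i=1,2$, which lies in $\mathrm{span}\{\mathbf 1_{x_1},\mathbf 1_{x_2}\}=\mathcal{F}$. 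This holds for both $k=1$ and $k=2$ with $\pi_2=1-\pi_1$.
\item Taking $\pi_1(x_1)\neq\pi_1(x_2)$ makes the routing genuinely non-constant, yet Lemma~\ref{app:subspace} still gives $\mathcal{G}=\mathcal{F}$.
\end{enumerate}

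The only subtle point is making sure the counterexample is legitimate. I would require valid routing probabilities in $[0,1]$ (e.g.\ $\pi_1=(0.2,0.7,\text{anything})$) and note that the value of $\pi_1$ at $x_3$ is irrelevant, because every element of $\mathcal{F}$ vanishes there. The general lesson is that invariance of $\mathcal{F}$ under multiplication by the $\pi_k$, rather than non-constancy, is what prevents enlargement.
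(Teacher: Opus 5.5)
Your proposal is correct and matches the paper's argument. For necessity, constant $\pi_k\equiv c_k$ makes each generator $\pi_k e_j=c_k e_j$ lie in the subspace $\mathcal{F}$. For non-sufficiency, you use the same three-atom example, where diagonal multiplication keeps $\pi_k\mathbf 1_{x_i}$ inside $\mathrm{span}\{\mathbf 1_{x_1},\mathbf 1_{x_2}\}$, and Lemma~\ref{app:subspace} closes both parts; your extra checks (positive atom masses, valid probabilities, irrelevance of $\pi_1(x_3)$) are harmless refinements the paper leaves implicit.
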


\begin{proposition}[Exact headroom]\label{prop:exact_headroom}
Let $P_{\mathcal{F}},P_{\mathcal{G}}$ be the $L^2(\rho)$ projections onto $\mathcal{F},\mathcal{G}$. Then
\begin{equation}
\begin{aligned}
\Delta_{\mathcal{F}}&=\inf_{f\in\mathcal{F}}\mathcal{R}(f)-\inf_{g\in\mathcal{G}}\mathcal{R}(g)\\
&=\lVert\mu-P_{\mathcal{F}}\mu\rVert^2-\lVert\mu-P_{\mathcal{G}}\mu\rVert^2\\
&=\lVert P_{\mathcal{G}}\mu-P_{\mathcal{F}}\mu\rVert^2\ \ge 0,
\end{aligned}
\end{equation}
with $\Delta_{\mathcal{F}}=0$ iff $P_{\mathcal{G}}\mu=P_{\mathcal{F}}\mu$.
\end{proposition}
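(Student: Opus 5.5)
The plan is to use Lemma~\ref{app:excess-sq} to turn the risk gap into a gap of squared distances, and then apply the Pythagorean theorem for nested orthogonal projections.

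\textbf{Step 1: reduce risks to distances.} By Lemma~\ref{app:subspace}, $\mathcal{F}$ is a finite-dimensional linear subspace of $L^2(\rho)$. So is $\mathcal{G}$, and both are therefore closed. Lemma~\ref{app:excess-sq} gives $\mathcal{R}(h)=\mathcal{R}(\mu)+\lVert h-\mu\rVert^2$, with each infimum attained at the corresponding projection. Hence
\[
\inf_{\mathcal{F}}\mathcal{R}=\mathcal{R}(\mu)+\lVert\mu-P_{\mathcal{F}}\mu\rVert^2 ,\qquad
\inf_{\mathcal{G}}\mathcal{R}=\mathcal{R}(\mu)+\lVert\mu-P_{\mathcal{G}}\mu\rVert^2 .
\]
Subtracting cancels the irreducible term $\mathcal{R}(\mu)$ and yields the second line of the display.

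\textbf{Step 2: the Pythagorean identity.} Decompose
\[
\mu-P_{\mathcal{F}}\mu=(\mu-P_{\mathcal{G}}\mu)+(P_{\mathcal{G}}\mu-P_{\mathcal{F}}\mu).
\]
The first summand is orthogonal to $\mathcal{G}$ by the defining property of orthogonal projection. The second summand lies in $\mathcal{G}$, because $P_{\mathcal{G}}\mu\in\mathcal{G}$ and $P_{\mathcal{F}}\mu\in\mathcal{F}\subseteq\mathcal{G}$ by Lemma~\ref{app:inclusion}. The cross term therefore vanishes, and
\[
\lVert\mu-P_{\mathcal{F}}\mu\rVert^2=\lVert\mu-P_{\mathcal{G}}\mu\rVert^2+\lVert P_{\mathcal{G}}\mu-P_{\mathcal{F}}\mu\rVert^2 .
\]
This gives the third line together with nonnegativity. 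The equality case follows because a norm vanishes only at zero: $\Delta_{\mathcal{F}}=0$ exactly when $P_{\mathcal{G}}\mu=P_{\mathcal{F}}\mu$ in $L^2(\rho)$.

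\textbf{Main obstacle.} The only delicate point is making sure $\mu$ and the generators $\pi_k e_j$ lie in $L^2(\rho)$, so that the projections are defined. This holds because $\mu,\pi_k\in[0,1]$ and $e_j\in L^2(\rho)$, so each product $\pi_k e_j$ is square-integrable and $\mathcal{G}$ is a genuine finite-dimensional subspace. I would state this as a one-line remark before Step 1. The rest of the argument is routine Hilbert-space algebra.
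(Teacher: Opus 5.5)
Your proof is correct and follows the paper's argument: the squared-loss excess-risk lemma reduces both infima to squared distances from $\mu$, and the Pythagorean identity for the nested subspaces $\mathcal{F}\subseteq\mathcal{G}$ gives the third line. Your direct check that the cross term vanishes is the unpacked form of the paper's $P_{\mathcal{F}}=P_{\mathcal{F}}P_{\mathcal{G}}$ step. One small attribution fix: $\mathcal{F}$ being a finite-dimensional subspace is the standing assumption of that subsection, and Lemma~\ref{app:subspace} is what gives the subspace structure of $\mathcal{G}$.
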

\begin{proof}
By the squared-loss excess-risk lemma the two infima are $\mathcal{R}(\mu)+\lVert\mu-P_{\mathcal{F}}\mu\rVert^2$ and $\mathcal{R}(\mu)+\lVert\mu-P_{\mathcal{G}}\mu\rVert^2$. Since $\mathcal{F}\subseteq\mathcal{G}$ we have $P_{\mathcal{F}}=P_{\mathcal{F}}P_{\mathcal{G}}$. The Pythagorean identity on nested subspaces (the classical projection theorem~\cite{luenberger1969}) gives $\lVert\mu-P_{\mathcal{F}}\mu\rVert^2=\lVert\mu-P_{\mathcal{G}}\mu\rVert^2+\lVert P_{\mathcal{G}}\mu-P_{\mathcal{F}}\mu\rVert^2$.
\end{proof}

No assumption that $\mu_k\in\mathcal{F}$ or $\mu\in\mathcal{F}$ is used. The headroom is the energy of $\mu$ along the intent-modulated directions in $\mathcal{G}$ that are orthogonal to $\mathcal{F}$.

\begin{lemma}[No closed form under normalized entropy]\label{prop:moe}
Under cross-entropy the class $\mathcal{F}$ cannot be a linear subspace. A finite-dimensional subspace contains functions valued outside $(0,1)$ on a set of positive probability, where cross-entropy is infinite. Modeling in logit space with $f=\sigma(z)$ makes the decomposed predictor $\sum_k\pi_k\sigma(z_k)$ a mixture of experts, because $\sigma(\sum_k\pi_k z_k)\ne\sum_k\pi_k\sigma(z_k)$. So $\mathcal{G}$ is not a subspace: there are no orthogonal projections and no Pythagorean identity, hence no closed-form projection expression for the cross-entropy headroom $\Delta_{\mathcal{F}}$ analogous to the squared-loss form. The cross-entropy risk itself remains well defined and finite; only its best-in-class gap lacks a closed form. Inclusion and weak dominance ($\Delta_{\mathcal{F}}\ge0$) still hold because they are set-theoretic. We therefore measure $\Delta_{\mathcal{F}}$ empirically as the held-out NE gap between the best pooled and the best decomposed model of the same architecture and capacity.
\end{lemma}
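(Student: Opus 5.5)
The plan is to treat Lemma~\ref{prop:moe} as three separate claims and check each one: non-subspace, mixture-of-experts, and no projection identity. After that, I would confirm that weak dominance survives. First I would show that under cross-entropy $\mathcal{F}$ cannot be a nontrivial linear subspace of admissible predictors. Take any $f\in\mathcal{F}$ that is not almost surely zero. Then $tf\in\mathcal{F}$ for every real $t$. Choosing $|t|$ large, or $t<0$, pushes $tf$ outside $(0,1)$ on the positive-probability set $\{f\neq 0\}$. There either $\log(tf)$ or $\log(1-tf)$ is undefined or equal to $+\infty$, so $\mathcal{R}(tf)=+\infty$. The zero function fails too, since $\log 0$ diverges wherever $Y=0$ has positive probability. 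So any subspace must contain predictors with infinite risk. This forces the natural parametrization $f=\sigma(z)$ with $z$ ranging over a linear (logit) class $\mathcal{Z}$.

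Second, I would show that in this parametrization $\mathcal{G}$ is not a subspace and not a logit-linear class. By \eqref{app:eq:Gclass}, a member of $\mathcal{G}$ has the form $g=\sum_k\pi_k\sigma(z_k)$, a mixture of experts gated by $\pi$. Strict convexity/concavity of $\sigma$ off the origin gives $\sigma(\sum_k\pi_kz_k)\neq\sum_k\pi_k\sigma(z_k)$ whenever the $z_k$ differ at a point with nondegenerate $\pi$. One explicit counterexample is enough: $K=2$, $\pi_1=\pi_2=\tfrac12$, $z_1=0$, $z_2=2$. So $g$ is generally not of the form $\sigma(z)$ with $z\in\mathcal{Z}$. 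Nor is $\mathcal{G}$ closed under addition or scaling in probability space, by the same boundedness argument as in the first step. Without a Hilbert-subspace structure, the objects $P_{\mathcal{F}}$, $P_{\mathcal{G}}$ and the nested Pythagorean identity used in Proposition~\ref{prop:exact_headroom} are unavailable. Lemma~\ref{app:excess-ce} replaces $\lVert\cdot\rVert^2$ with an averaged KL divergence, which is not a squared norm and does not satisfy a Pythagorean equality for nonlinear families. The most one has is a generalized Pythagorean inequality for convex sets, and $\mathcal{G}$ is not convex in the relevant dual coordinates.

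Third, I would record the positive parts. The risk of any $(0,1)$-valued $g\in\mathcal{G}$ is finite: $g$ is a convex combination of $(0,1)$-valued functions, so it is itself $(0,1)$-valued, and the risk is finite under the standard boundedness-away-from-$\{0,1\}$ assumption. Inclusion (Lemma~\ref{app:inclusion}) and hence $\Delta_{\mathcal{F}}\ge0$ remain valid because they are set-theoretic.

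The main obstacle is making ``no closed form'' precise, since it is a negative claim. I would state it modestly: there is no orthogonal-projection expression of the kind in Proposition~\ref{prop:exact_headroom}. To support this, I would exhibit the failure of the Pythagorean identity on a two-atom example, comparing the KL gap from the pooled to the decomposed minimizer against the difference of the excess risks. I would not attempt an impossibility proof for all conceivable formulas. This justifies measuring $\Delta_{\mathcal{F}}$ empirically.
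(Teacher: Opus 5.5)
This is essentially the paper's own argument, carried out in more detail than the paper gives: any subspace must contain predictors leaving $(0,1)$; the logit parametrization makes $\mathcal{G}$ a probability-space mixture of experts rather than a subspace, so the $L^2$ projections and Pythagorean identity behind Proposition~\ref{prop:exact_headroom} are unavailable; and inclusion and $\Delta_{\mathcal{F}}\ge0$ survive as set-theoretic facts. Three slips need fixing: $f\equiv0$ has infinite loss where $Y=1$ (not $Y=0$) has positive probability; your claim that $\sigma(\sum_k\pi_kz_k)\ne\sum_k\pi_k\sigma(z_k)$ whenever the $z_k$ differ is false, since $z=(-a,a)$ with $\pi=(\tfrac12,\tfrac12)$ gives equality because $\sigma(-a)=1-\sigma(a)$, so rest on your explicit $z=(0,2)$ example instead; and KL does obey an exact Pythagorean identity on logit-linear classes $\sigma(\mathcal{Z})$ via moment matching, so the real obstruction is that $\mathcal{G}$ is not logit-linear, which the pointwise inequality alone does not show (if $\mathcal{Z}$ contains all functions on a finite support, then $\mathcal{G}=\mathcal{F}$ is itself logit-linear).
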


\subsection{Genericity of strict improvement}

\begin{lemma}[Zero set of a real-analytic function]\label{app:analytic}
If $h:U\to\mathbb R$ is real-analytic on a connected open $U\subseteq\mathbb R^m$ and $h\not\equiv0$, then $\{h=0\}$ has Lebesgue measure zero~\cite{mityagin2015zero}.
\end{lemma}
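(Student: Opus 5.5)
The plan is induction on the dimension $m$, using Fubini's theorem and the one-variable identity theorem for analytic functions.

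\emph{Reduction to a ball.} First reduce to the case where $U$ is a small open ball. Since $U$ is second countable, it is a countable union of balls, and a countable union of null sets is null. So it suffices to show that $\{h=0\}\cap B$ is null for every ball $B\subseteq U$. The obstacle is that $h\not\equiv0$ on $U$ does not immediately give $h\not\equiv 0$ on a given ball $B$. Connectedness handles this. Let $W$ be the set of points $p\in U$ near which $h$ vanishes identically. $W$ is open by definition. It is also closed in $U$: if $p$ is a limit of points of $W$, then every partial derivative of $h$ vanishes at $p$ by continuity, so the Taylor series of $h$ at $p$ is zero, and by analyticity $h\equiv0$ near $p$. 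Since $U$ is connected and $h\not\equiv0$, we get $W=\emptyset$. Hence $h$ does not vanish identically on any ball.

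\emph{Base case $m=1$.} On an interval, suppose the zeros of $h$ accumulate at an interior point $p$. Then all derivatives of $h$ vanish at $p$, which would put $p\in W$, a contradiction. So the zero set is discrete in $U$, hence countable, hence null.

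\emph{Inductive step, $m-1\to m$.} Work on a box $B=B'\times I$ with $B'\subseteq\mathbb{R}^{m-1}$ and $I$ an interval, on which $h\not\equiv0$. Because the Taylor coefficients of $h$ are not all zero, there is a multi-index $\alpha$ with $\partial^\alpha h\not\equiv0$; the aim is a nontrivial analytic function on the smaller factor. Expand $h(y,t)=\sum_j c_j(y)(t-t_0)^j$ in the last variable, shrinking the box if necessary so that this expansion converges. Each coefficient $c_j$ is real-analytic on $B'$, and some $c_{j_0}\not\equiv0$, since otherwise $h\equiv0$ on the box. By the induction hypothesis, the set $N=\{y\in B' : c_{j_0}(y)=0\}$ is null in $\mathbb{R}^{m-1}$. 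For $y\notin N$, the slice $t\mapsto h(y,t)$ is a nonzero analytic function of one variable, so its zero set in $I$ is null by the base case. Fubini then gives
\[
\mathrm{Leb}_m(\{h=0\}\cap B)\le \int_{B'\setminus N}\mathrm{Leb}_1(\{t: h(y,t)=0\})\,dy+\mathrm{Leb}_{m-1}(N)\cdot|I|=0.
\]

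\emph{Main obstacle.} The delicate step is making the convergence domain of the expansion in $t$ uniform over $y$, so that each $c_j$ is analytic on a common $B'$. I would handle this by choosing the boxes small enough that a single convergent multivariate power series of $h$ represents it on the whole box; rearranging that series then gives the $c_j$ as convergent power series in $y$. If this uniformity becomes awkward, I would simply cite the result~\cite{mityagin2015zero}, as the statement already does.
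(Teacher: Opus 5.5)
Your proof is correct and complete. The clopen-set argument keeps $h\not\equiv0$ on every small box where a single absolutely convergent power series represents $h$. Absolute convergence on such a box justifies $h(y,t)=\sum_j c_j(y)(t-t_0)^j$ with each $c_j$ analytic on the smaller box. Induction applied to one nonzero $c_{j_0}$, the one-variable identity theorem on the remaining slices, and Fubini then finish it.

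The paper gives no argument of its own beyond citing Mityagin's note. Your induction-on-dimension/Fubini scheme is the standard proof of this fact, and I believe it is essentially the one in that reference.
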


\begin{proposition}[Genericity, squared loss]\label{prop:genericity}
Let $\Theta\subseteq\mathbb R^m$ be open and connected. Fix $\rho$ and the routing $\pi$, hence $\mathcal{F},\mathcal{G},P_{\mathcal{F}},P_{\mathcal{G}}$. Assume
\begin{itemize}
\item[(A1)] $\theta\mapsto\mu_\theta$ is real-analytic into $L^2(\rho)$, with a local $L^2(\rho)$ envelope dominating $\theta\mapsto\mu_\theta$, so Hilbert-valued analyticity holds.
\item[(A2)] there is $\theta_0$ with $P_{\mathcal{G}}\mu_{\theta_0}\ne P_{\mathcal{F}}\mu_{\theta_0}$.
\end{itemize}
Then $\{\theta:\Delta_{\mathcal{F}}(\theta)=0\}$ has Lebesgue measure zero. Under any prior absolutely continuous with respect to Lebesgue measure, $\Delta_{\mathcal{F}}>0$ almost surely.
\end{proposition}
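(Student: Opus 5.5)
The plan is to reduce the zero set of $\Delta_{\mathcal{F}}$ to the zero set of a single real-analytic scalar function of $\theta$, and then invoke Lemma~\ref{app:analytic}. By Proposition~\ref{prop:exact_headroom}, for every $\theta$,
\[
\Delta_{\mathcal{F}}(\theta)=\lVert (P_{\mathcal{G}}-P_{\mathcal{F}})\mu_\theta\rVert^2 .
\]
Since $\rho$ and $\pi$ are fixed, $\mathcal{F}$ and $\mathcal{G}$ are fixed finite-dimensional subspaces, the latter of dimension at most $KD$ by Lemma~\ref{app:subspace}. Hence $Q:=P_{\mathcal{G}}-P_{\mathcal{F}}$ is a fixed bounded linear operator on $L^2(\rho)$. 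Because $\mathcal{F}\subseteq\mathcal{G}$, $Q$ is itself the orthogonal projection onto $\mathcal{G}\ominus\mathcal{F}$. Define $h(\theta):=\Delta_{\mathcal{F}}(\theta)=\langle Q\mu_\theta,\mu_\theta\rangle$. Then $\{\Delta_{\mathcal{F}}=0\}=\{h=0\}$.

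The first step is to show that $h$ is real-analytic on $\Theta$. Choose an orthonormal basis $u_1,\dots,u_r$ of $\mathcal{G}\ominus\mathcal{F}$, where $r\le KD$. Then
\[
h(\theta)=\sum_{i=1}^r \langle \mu_\theta,u_i\rangle^2 .
\]
Each coordinate $\theta\mapsto\langle\mu_\theta,u_i\rangle$ is the composition of the Hilbert-valued analytic map of (A1) with a continuous linear functional. Locally around any $\theta_1$, $\mu_\theta=\sum_\alpha c_\alpha(\theta-\theta_1)^\alpha$ converges in $L^2(\rho)$, and pairing with $u_i$ termwise yields a convergent scalar power series. So each coordinate is a real-analytic scalar function. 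A finite sum of squares of real-analytic functions is real-analytic, so $h$ is real-analytic on the connected open set $\Theta$.

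The second step is non-triviality. Assumption (A2) gives $Q\mu_{\theta_0}\neq0$, so $h(\theta_0)>0$ and $h\not\equiv0$. Lemma~\ref{app:analytic} then shows that $\{h=0\}$ has Lebesgue measure zero. For the final claim, any prior absolutely continuous with respect to Lebesgue measure assigns probability zero to this null set. Therefore $\Delta_{\mathcal{F}}>0$ almost surely.

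I expect the main obstacle to be the analyticity transfer in the first step. The issue is interpreting (A1) correctly: it says Hilbert-valued analyticity holds, with the local envelope serving to justify exchanging the integral and the power series, and that is what makes the pairing with $u_i$ analytic. I would state explicitly that only weak analyticity against finitely many fixed vectors is needed. This is why finite dimensionality of $\mathcal{G}$, via Lemma~\ref{app:subspace}, matters: it avoids any infinite sum of squares, for which analyticity would need more care.
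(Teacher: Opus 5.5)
Your proposal is correct and matches the paper's proof: write $\Delta_{\mathcal{F}}(\theta)=\lVert(P_{\mathcal{G}}-P_{\mathcal{F}})\mu_\theta\rVert^2$ via the exact-headroom proposition, get real-analyticity from (A1) by applying a fixed bounded operator and the inner product, get $h\not\equiv 0$ from (A2), and conclude with the analytic zero-set lemma plus absolute continuity of the prior. Your finite orthonormal-basis expansion is just a more explicit check of the analyticity step; the paper composes directly with the continuous bilinear inner product, which shows that finite dimensionality of $\mathcal{G}$ is convenient here but not actually needed.
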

\begin{proof}
Let $r(\theta)=\lVert P_{\mathcal{G}}\mu_\theta-P_{\mathcal{F}}\mu_\theta\rVert^2$. Since $\pi$ and $\rho$ are fixed, $P_{\mathcal{F}}$ and $P_{\mathcal{G}}$ are fixed bounded linear operators. By (A1) $\theta\mapsto(P_{\mathcal{G}}-P_{\mathcal{F}})\mu_\theta$ is real-analytic into $L^2(\rho)$. Composing with the continuous bilinear inner product makes $r$ real-analytic. By (A2), $r(\theta_0)>0$, so $r\not\equiv0$. The analytic zero-set lemma makes $\{r=0\}$ null. The exact-headroom proposition gives $\Delta_{\mathcal{F}}=r$.
\end{proof}

\begin{corollary}[Affine instance]\label{app:affine}
Let $\mathcal{F}$ be affine, $K=2$, let $\mu_H,\mu_L$ be affine, and let $\pi_H$ be affine. From $\mu=\mu_L+\pi_H(\mu_H-\mu_L)$, a product of two affine functions is affine iff a factor is constant. So $\mu\in\mathcal{F}$ iff $\pi_H$ is constant or $\mu_H-\mu_L$ is constant. That is a finite union of proper affine subspaces of parameter space, hence null. Any $\theta$ outside it witnesses (A2). No analytic machinery is needed here.
\end{corollary}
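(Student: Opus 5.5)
The statement to prove is Corollary~\ref{app:affine}. The plan is to verify hypothesis (A2) of Proposition~\ref{prop:genericity} directly from an algebraic description of the zero-headroom set. Parametrize by $\theta=(\text{coefficients of }\mu_H,\ \mu_L,\ \pi_H)$, so that each of $\mu_H(x)=a_H+b_H^\top x$, $\mu_L(x)=a_L+b_L^\top x$ and $\pi_H(x)=c+d^\top x$ is affine in $x$ with coefficients in $\theta$. The law of total probability gives $\mu=\pi_L\mu_L+\pi_H\mu_H$ with $\pi_L=1-\pi_H$, which rewrites as
\[
\mu=\mu_L+\pi_H(\mu_H-\mu_L).
\]
Since $\mu_L\in\mathcal{F}$ and $\mathcal{F}$ is the affine class, we have $\mu\in\mathcal{F}$ iff the product $\pi_H\cdot(\mu_H-\mu_L)$ is affine in $x$.

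The key algebraic step treats the product $(c+d^\top x)(e+h^\top x)$, where $e=a_H-a_L$ and $h=b_H-b_L$. Its quadratic part is $x^\top d\,h^\top x$, the quadratic form of the rank-one matrix $dh^\top$. This form vanishes identically iff its symmetrization $dh^\top+hd^\top$ is zero, which for vectors means $d=0$ or $h=0$. I will check that last claim carefully, since it is the only real content: if $d\neq0$ and $h\neq0$, pick $x$ with $d^\top x\neq0$ and $h^\top x\neq0$; such an $x$ exists because the union of two hyperplanes does not cover $\mathbb R^p$. Then $(d^\top x)(h^\top x)\neq0$.

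The equivalence is stated as a polynomial identity in $x$. To make it hold as an identity in $L^2(\rho)$, I will assume $\rho$ has support that is not contained in any quadric, for example any $\rho$ with an open set in its support. I expect this to be the main subtlety, since the corollary silently uses it. Under that assumption, $\mu\in\mathcal{F}$ iff $d=0$ or $b_H=b_L$, i.e.\ iff $\pi_H$ or $\mu_H-\mu_L$ is constant.

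This zero set $\{d=0\}\cup\{b_H-b_L=0\}$ is a union of two proper linear subspaces of parameter space, each of codimension $p\ge1$, and is therefore Lebesgue-null. For $\theta$ outside it, $\mu\notin\mathcal{F}$ while $\mu\in\mathcal{G}$, because $\mathcal{G}\ni\pi_L\mu_L+\pi_H\mu_H$ with $\mu_L,\mu_H\in\mathcal{F}$. Hence $P_{\mathcal{G}}\mu=\mu\neq P_{\mathcal{F}}\mu$, and Proposition~\ref{prop:exact_headroom} gives $\Delta_{\mathcal{F}}=\lVert\mu-P_{\mathcal{F}}\mu\rVert^2>0$. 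Conversely, on the null set we have $\mu\in\mathcal{F}$, so $\Delta_{\mathcal{F}}=0$; the zero-headroom set is therefore exactly this null set.

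This argument witnesses (A2) and also proves the measure-zero conclusion directly, with no appeal to Lemma~\ref{app:analytic}. One caveat remains: the routing $\pi$ is part of $\theta$ here rather than fixed as in Proposition~\ref{prop:genericity}. The direct argument avoids this issue, because it never needs fixed projections.
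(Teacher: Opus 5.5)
Your proof is correct and takes essentially the same route as the paper, which argues inside the corollary itself: you rewrite $\mu=\mu_L+\pi_H(\mu_H-\mu_L)$, show that a product of two affine functions is affine iff one factor is constant, identify the exceptional set $\{d=0\}\cup\{b_H=b_L\}$ as a union of proper linear subspaces, and note that off this set $\mu\in\mathcal{G}\setminus\mathcal{F}$, so $P_{\mathcal{G}}\mu=\mu\neq P_{\mathcal{F}}\mu$. Your added hypothesis that $\mathrm{supp}\,\rho$ lies in no quadric is a genuine, necessary refinement the paper leaves implicit (if $\rho$ sits on $p+1$ affinely independent points, every function is $\rho$-a.e.\ affine and $\Delta_{\mathcal{F}}\equiv 0$), and you are also right that $\pi_H$ is part of $\theta$ here, which your direct measure-zero argument handles without the analytic lemma.
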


\begin{proposition}[Genericity, cross-entropy]\label{prop:generic-ce}
Parametrize the conditionals by $\theta$ in a connected open $\Theta\subseteq\mathbb R^m$. Suppose $\theta\mapsto\inf_{f\in\mathcal{F}}\mathcal{R}_\theta(f)$ and $\theta\mapsto\inf_{g\in\mathcal{G}}\mathcal{R}_\theta(g)$ are real-analytic, which holds for a smooth link, a strictly convex loss, and a unique interior minimizer with non-singular Hessian by the implicit function theorem. Suppose $\Delta_{\mathcal{F}}(\theta_0)>0$ for some $\theta_0$. Then $\{\theta:\Delta_{\mathcal{F}}(\theta)=0\}$ has Lebesgue measure zero.
\end{proposition}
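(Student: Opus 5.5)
The plan mirrors the squared-loss genericity argument (Proposition~\ref{prop:genericity}), with the closed-form projection energy replaced by the difference of two analytic value functions. Define $h(\theta):=\Delta_{\mathcal{F}}(\theta)=\inf_{f\in\mathcal{F}}\mathcal{R}_\theta(f)-\inf_{g\in\mathcal{G}}\mathcal{R}_\theta(g)$. By hypothesis both infima are real-analytic functions of $\theta$ on $\Theta$. A difference of real-analytic functions is real-analytic, so $h$ is real-analytic on the connected open set $\Theta$. Since $h(\theta_0)=\Delta_{\mathcal{F}}(\theta_0)>0$, we have $h\not\equiv0$. The zero-set lemma (Lemma~\ref{app:analytic}) then shows that $\{h=0\}$ has Lebesgue measure zero. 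This is exactly the claim.

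As a supplement, I would also note two consequences. First, by inclusion (Lemma~\ref{app:inclusion}) we have $h\ge0$ everywhere, so the exceptional set is precisely where weak dominance holds with equality. Second, under any prior absolutely continuous with respect to Lebesgue measure, $\Delta_{\mathcal{F}}>0$ almost surely. The relevant results are Lemma~\ref{app:excess-ce} and Lemma~\ref{prop:moe}: since $\mathcal{G}$ is not a subspace under cross-entropy, no projection formula is available. The analyticity of the value functions is therefore the only substitute for the explicit form $r(\theta)$ used in the squared-loss case.

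The step I expect to be the main obstacle is justifying the parenthetical sufficient condition, namely that the value functions are real-analytic. This matters if one wants more than the stated assumption. My approach would go as follows.
\begin{enumerate}
\item Parametrize the best-in-class element of each class by finitely many parameters $w$ (for $\mathcal{F}$) and $(w_1,\dots,w_K)$ (for $\mathcal{G}$), with predictor $\sigma(z_w)$ or $\sum_k\pi_k\sigma(z_{w_k})$.
\item Write the risk as $\mathcal{R}_\theta(w)=\mathbb{E}_\theta[\ell(Y,\text{prediction}_w)]$. With a smooth (analytic) link and analytic dependence of the data law on $\theta$, this risk is jointly real-analytic in $(\theta,w)$.
\item At a unique interior minimizer $w^*(\theta)$ with non-singular Hessian, apply the analytic implicit function theorem to $\nabla_w\mathcal{R}_\theta(w)=0$. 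This makes $w^*(\theta)$ locally analytic.
\item It follows that $\theta\mapsto\mathcal{R}_\theta(w^*(\theta))$ is locally analytic.
\end{enumerate}
One subtlety remains. Local analyticity must globalize across all of $\Theta$, which requires that the minimizer stays unique and nondegenerate everywhere, so that the local analytic branches patch consistently. I would impose this as part of the hypothesis rather than prove it. The proposition only needs analyticity on a connected domain. If uniqueness fails somewhere, I would restrict to a connected open subset containing $\theta_0$ and note that the measure-zero conclusion then holds on that subset.
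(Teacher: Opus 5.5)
Your proof is correct and matches the paper's argument: $\Delta_{\mathcal{F}}$ is a difference of two value functions that are real-analytic by hypothesis, it is nonzero at $\theta_0$, and the analytic zero-set lemma then makes $\{\Delta_{\mathcal{F}}=0\}$ Lebesgue-null. Your discussion of the implicit-function justification goes beyond the paper, which simply assumes analyticity; in particular you are right that the link must be analytic and not merely smooth, and that uniqueness and nondegeneracy must hold throughout $\Theta$ for the local analytic branches to patch together.
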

\begin{proof}
$\Delta_{\mathcal{F}}(\theta)=\inf_{\mathcal{F}}\mathcal{R}_\theta-\inf_{\mathcal{G}}\mathcal{R}_\theta$ is real-analytic by hypothesis and non-zero at $\theta_0$. The analytic zero-set lemma makes its zero set null. The passage from a null set to probability zero is absolute continuity. The content is the analyticity of the two infima, which for cross-entropy is a genuine implicit-function argument rather than a projection identity.
\end{proof}

\begin{lemma}[Genericity is not a deployment guarantee]\label{prop:deploy}
Genericity rules out the knife-edge in which the pooled class already fits $\mu$ as well as the decomposed class does. The deployed system is a single unknown truth $\theta^\ast$, which can lie on the null set. Structured configurations such as coarse intent buckets, symmetries, or sparsity are exactly where it might. The operative claim for deployment is the measured headroom, not the prior.
\end{lemma}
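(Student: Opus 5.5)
The statement immediately above is Lemma~\ref{prop:deploy}. It is a scoping remark rather than a quantitative claim, so the proof plan is to make its two assertions precise and check each against Propositions~\ref{prop:genericity} and~\ref{prop:generic-ce}. \emph{First}, I would restate what genericity actually delivers. Under (A1)--(A2), or the analyticity hypotheses of the cross-entropy version, the zero set $Z=\{\theta:\Delta_{\mathcal{F}}(\theta)=0\}$ is Lebesgue-null. Consequently $\Pr_\nu(\theta\in Z)=0$ for every prior $\nu\ll$ Lebesgue. This is a statement about a \emph{random} $\theta$ drawn from such a prior. It is silent about any fixed point, so the knife-edge it excludes is exactly the event $P_{\mathcal{G}}\mu_\theta=P_{\mathcal{F}}\mu_\theta$ (squared loss), equivalently the case where the best pooled head already attains the decomposed optimum.

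\emph{Second}, I would show that a single fixed truth $\theta^\ast$ can lie in $Z$, so the "can lie on the null set" claim is not vacuous. The cleanest route is to exhibit witnesses from earlier results. Proposition~\ref{prop:iff} gives a non-constant routing with $\mathcal{G}=\mathcal{F}$, hence $\Delta_{\mathcal{F}}=0$ for every $\mu$. Corollary~\ref{app:affine} gives the explicit null set: constant $\pi_H$ or constant $\mu_H-\mu_L$. Each of these is a legitimate configuration, and a Dirac prior at such a $\theta^\ast$ is not absolutely continuous, so the genericity proposition does not apply to it.

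\emph{Third}, I would connect these witnesses to the structured cases the lemma names:
\begin{itemize}
\item \emph{Coarse intent buckets:} routing measurable with respect to the features spanning $\mathcal{F}$, so multiplication by $\pi_k$ preserves $\mathcal{F}$, as in the diagonal example.
\item \emph{Symmetries:} $\mu_H-\mu_L$ constant, or $\pi_k$ constant across $x$.
\item \emph{Sparsity:} $\mu$ supported in directions already in $\mathcal{F}$, so $P_{\mathcal{G}}\mu=P_{\mathcal{F}}\mu$.
\end{itemize}
Each is a lower-dimensional (typically affine or analytic) subvariety of $\Theta$, consistent with $Z$ being null yet reachable.

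The conclusion is that deployment claims must rest on an estimate of $\Delta_{\mathcal{F}}(\theta^\ast)$ itself, namely the held-out NE gap of Lemma~\ref{prop:moe}, rather than on prior-almost-sure positivity. The main obstacle is purely one of formalization. The lemma is interpretive, so I would fix the logical form as "$\Pr_\nu(Z)=0$ for all $\nu\ll\mathrm{Leb}$ does not imply $\theta^\ast\notin Z$, and the structured configurations listed lie in $Z$." I would then prove this only by the explicit examples above, without attempting to characterize $Z$ in general.
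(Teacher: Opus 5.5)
Your proposal is correct and is essentially the paper's own reasoning: the paper gives no separate proof, the lemma being a scoping remark whose content is exactly that Propositions~\ref{prop:genericity} and~\ref{prop:generic-ce} are almost-sure statements under absolutely continuous priors, so a fixed $\theta^\ast$ may lie in the null set $Z$, with Corollary~\ref{app:affine} (constant $\pi_H$ or constant $\mu_H-\mu_L$) supplying such points. Two corrections to your witnesses: first, the example of Proposition~\ref{prop:iff}, and your coarse-bucket case under the fixed routing of Proposition~\ref{prop:genericity}, give $\mathcal{G}=\mathcal{F}$, so (A2) fails and $Z=\Theta$, which is a case genericity says nothing about rather than a point on a null set, and your claim that every listed case is a lower-dimensional subvariety fails there; second, measurability of $\pi$ in the features spanning $\mathcal{F}$ does not by itself give $\pi_k\mathcal{F}\subseteq\mathcal{F}$ (affine $\mathcal{F}$ with non-constant affine $\pi_H$ is a counterexample), and the correct condition is $\pi_k e_j\in\mathcal{F}$ from Lemma~\ref{app:subspace}, e.g.\ $\mathcal{F}$ piecewise constant on the same cells as $\pi$.
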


\subsection{Responsibility-gated dominance}

\begin{theorem}[Responsibility-gated dominance]\label{app:respgate}
Let $\mathcal{F}$ be any single-expert hypothesis class. Let the gate $\pi=(\pi_1,\dots,\pi_K)$ with $\pi_k=P(K{=}k\mid X{=}x,C{=}1)$ be a supervised, input-dependent responsibility over an intent $K$ observed at training, with $\sum_k\pi_k=1$, and define $\mathcal{G}=\{\sum_k\pi_k f_k: f_k\in\mathcal{F}\}$. Then for any risk $\mathcal{R}$:
\begin{itemize}
\item[(i)] no worse: $\inf_{\mathcal{G}}\mathcal{R}\le\inf_{\mathcal{F}}\mathcal{R}$.
\item[(ii)] generically better: if the truth is parametrized by $\theta$ in a connected open set with $\theta\mapsto\inf_{\mathcal{F}}\mathcal{R}_\theta,\inf_{\mathcal{G}}\mathcal{R}_\theta$ real-analytic and $\inf_{\mathcal{G}}\mathcal{R}<\inf_{\mathcal{F}}\mathcal{R}$ at one $\theta_0$, then $\{\theta:\inf_{\mathcal{G}}\mathcal{R}=\inf_{\mathcal{F}}\mathcal{R}\}$ is Lebesgue-null.
\item[(iii)] not BMA: the gate is a fixed per-example responsibility, not a posterior over models, so the minimizer $g^\star=\sum_k\pi_kf_k^\star$ does not collapse to a single expert as $n\to\infty$ whenever $\pi$ is non-degenerate and the $\mu_k$ differ.
\end{itemize}
\end{theorem}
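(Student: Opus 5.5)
Part (i) is set-theoretic and follows from the Inclusion lemma (Lemma~\ref{app:inclusion}). Given any $f\in\mathcal{F}$, choose $f_k\equiv f$ for every $k$. Because $\sum_k\pi_k=1$, the element $\sum_k\pi_k f_k=f$ lies in $\mathcal{G}$, so $\mathcal{F}\subseteq\mathcal{G}$. The infimum of any functional $\mathcal{R}$ over a superset is no larger than over the subset, so $\inf_{\mathcal{G}}\mathcal{R}\le\inf_{\mathcal{F}}\mathcal{R}$. No properties of the loss, the class, or the gate beyond $\sum_k\pi_k=1$ are used. This is exactly $\Delta_{\mathcal{F}}\ge0$ in \eqref{app:eq:headroom}.

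For (ii), I will follow the proof of Proposition~\ref{prop:generic-ce}. Set $h(\theta):=\inf_{\mathcal{F}}\mathcal{R}_\theta-\inf_{\mathcal{G}}\mathcal{R}_\theta$. By hypothesis $h$ is a difference of real-analytic functions on a connected open set, hence real-analytic. By part (i), $h\ge0$ everywhere, and by hypothesis $h(\theta_0)>0$, so $h\not\equiv0$. Lemma~\ref{app:analytic} then makes $\{h=0\}$ Lebesgue-null, and this set is exactly the equality set in the statement. For squared loss, the analyticity hypothesis can be discharged through $h=\lVert(P_{\mathcal{G}}-P_{\mathcal{F}})\mu_\theta\rVert^2$, as in Proposition~\ref{prop:genericity}. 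In general it is simply assumed, so this step is routine.

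Part (iii) needs an actual argument, and I expect it to be the main obstacle, mostly because ``collapse'' has to be made precise. I will contrast the gate with Bayesian model averaging. In BMA the weights are posteriors $P(f_k\mid\text{data})$ that do not depend on $x$, and they concentrate on a single model as $n\to\infty$ (the Minka / Monteith et al.\ observation). Here, by contrast, $\pi_k(x)=P(K{=}k\mid X{=}x,C{=}1)$ is a population quantity fixed by the data-generating law and does not depend on $n$. Consequently the population minimizer $g^\star=\sum_k\pi_kf_k^\star$ is the limit object itself, and no $n$-dependent reweighting exists that could concentrate.

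It remains to show that $g^\star$ is not a single expert, that is, $g^\star\ne f_j^\star$ for every $j$ on a set of positive measure. I will argue this in the rich-class case, where $f_k^\star=\mu_k$, which is the per-intent Bayes optimum under a strictly proper loss (Lemmas~\ref{app:excess-sq} and~\ref{app:excess-ce}, applied on the slice $K=k$). Then $g^\star=\sum_k\pi_k\mu_k=\mu$. If $\pi$ is non-degenerate, meaning at least two $\pi_k>0$ on a positive-measure set, and the $\mu_k$ differ there, then $\mu$ is a strict convex combination of distinct values and so equals none of the $\mu_k$. This is exactly the same ``at most one'' fact used in the proof of Proposition~\ref{prop:observability}.

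For a restricted class $\mathcal{F}$, I would state (iii) under the same non-degeneracy assumption together with distinct per-intent minimizers $f_k^\star$. The identical convex-combination argument then gives non-collapse.
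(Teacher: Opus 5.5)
Your parts (i) and (ii) are the paper's proof: inclusion via $f_k\equiv f$, then the analytic zero-set lemma applied to the nonnegative gap, with the squared-loss analyticity discharged through $\lVert(P_{\mathcal{G}}-P_{\mathcal{F}})\mu_\theta\rVert^2$. Your first paragraph on (iii) is also the whole of the paper's argument: BMA weights are $x$-independent posteriors that concentrate, whereas $\pi_k$ is fixed by the data-generating law and constant in $n$. The paper stops at asserting that $g^\star$ stays an input-dependent mixture.

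The extra step you add to make non-collapse precise fails for $K\ge3$. You claim that a strict convex combination of distinct values equals none of them, citing the fact used for Proposition~\ref{prop:observability}. That fact gives \emph{at most one}, not \emph{none}. Since $\mu-\mu_j=\sum_{k\ne j}\pi_k(\mu_k-\mu_j)$, the terms can cancel once $K\ge3$. For any strictly positive gate, take $\mu_2\equiv 1/2$, $\mu_1=\mu_2-\pi_3\epsilon$ and $\mu_3=\mu_2+\pi_1\epsilon$ with small $\epsilon>0$. Then $g^\star=\mu=\mu_2=f_2^\star$ everywhere. This is a collapse onto expert $2$ in exactly your sense, even though $\pi$ is non-degenerate and the $\mu_k$ are pairwise distinct.

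Convexity only rules out the indices attaining the extreme positive-weight values. When $K=2$ that covers both indices, since $\mu-\mu_j=\pi_{k'}(\mu_{k'}-\mu_j)$, so your argument is valid for the deployed binary case. For general $K$, and for your restricted-class version with distinct $f_k^\star$, you need an extra non-cancellation hypothesis. One choice is $\sum_{k\ne j}\pi_k(f_k^\star-f_j^\star)\ne0$ on a positive-measure set for every $j$; it fails only on a knife-edge set of configurations, in the spirit of (ii).

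Separately, in the rich class the composite-risk minimizer pins down only $g^\star=\mu$, not the heads. The tuple $(\mu,\dots,\mu)$ is an equally good minimizer that collapses trivially. Your non-collapse claim therefore has to be stated for the stratified heads $f_k^\star=\mu_k$, as you do implicitly.
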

\begin{proof}[Proof sketch]
(i) Setting $f_k\equiv f$ gives $\sum_k\pi_k f=f$ since $\sum_k\pi_k=1$, so $\mathcal{F}\subseteq\mathcal{G}$, and the infimum over a superset can only decrease. (ii) Let $r(\theta)=\inf_{\mathcal{F}}\mathcal{R}_\theta-\inf_{\mathcal{G}}\mathcal{R}_\theta\ge0$, real-analytic by hypothesis and positive at $\theta_0$, so $r\not\equiv0$, and by Mityagin's theorem its zero set is Lebesgue-null. Under squared loss $r(\theta)=\lVert(P_{\mathcal{G}}-P_{\mathcal{F}})\mu_\theta\rVert^2$ is analytic directly. (iii) Bayesian model averaging weights $w_k=P(M_k\mid D_n)$ concentrate on one model by posterior consistency, so $\sum_k w_k f_k\to f_{k^\star}$, which is selection, not combination. Here $\pi_k$ is the data-generating responsibility, constant in $n$ and varying in $x$, so $g^\star$ stays a genuine input-dependent mixture.
\end{proof}

Part (iii) records the standard mixture-of-experts versus Bayesian model averaging distinction~\cite{minka2000bma,monteith2011bmc}. A fitted input-dependent gate combines rather than selects, whereas BMA weights concentrate on a single model, so BMA does not enlarge the class. We include it for completeness and do not claim it as novel, since combination rather than selection is the defining property of a mixture of experts. What is distinctive to MARCO is not combination versus selection but that the gate is a supervised observed responsibility read from the logged click type, which lets each expert be exposed and separately calibrated, a property that sits outside both latent-gate mixtures of experts and Bayesian model averaging. Parts (i) and (ii) are the weak-dominance and genericity results, here stated for a general supervised responsibility gate.

\subsection{Realized gain is a signed decomposition}

$\Delta_{\mathcal{F}}$ is a population best-in-class quantity. The deployed model uses estimated routing $\widehat\pi$, finite-sample heads, and a stratified training objective. We express the realized gain as an exact telescoping identity through reference predictors: the best-in-class pooled model $P_{\mathcal{F}}\mu$ with risk $\inf_{\mathcal{F}}\mathcal{R}$, the best-in-class decomposed model $P_{\mathcal{G}}\mu$ with risk $\inf_{\mathcal{G}}\mathcal{R}$, the stratified predictor $g_{\mathrm{strat}}=\sum_k\pi_k P_{\mathcal{F}}\mu_k\in\mathcal{G}$ that fits each head on its own intent slice and routes by the true $\pi$, and the deployed model $\widehat g$ with estimated heads and routing $\widehat\pi$. Writing $\widehat f$ for the fitted pooled model,
\begin{equation}
\begin{aligned}
\mathcal{R}(\widehat f)-\mathcal{R}(\widehat g)
={}&\Delta_{\mathcal{F}}
+\underbrace{\big(\mathcal{R}(\widehat f)-\textstyle\inf_{\mathcal{F}}\mathcal{R}\big)}_{\mathrm{est}_{\mathcal{F}}\ge0}\\
&-\underbrace{\big(\mathcal{R}(g_{\mathrm{strat}})-\textstyle\inf_{\mathcal{G}}\mathcal{R}\big)}_{\varepsilon_{\mathrm{couple}}\ge0}
-\underbrace{\big(\mathcal{R}(\widehat g)-\mathcal{R}(g_{\mathrm{strat}})\big)}_{\varepsilon_{\mathrm{route}}}.
\end{aligned}
\end{equation}
The identity is exact by cancellation, for any choice of reference predictors. Three terms carry a fixed sign by construction: $\Delta_{\mathcal{F}}\ge0$ is weak dominance, $\mathrm{est}_{\mathcal{F}}\ge0$ is the pooled estimation error, and $\varepsilon_{\mathrm{couple}}\ge0$ because $g_{\mathrm{strat}}\in\mathcal{G}$ cannot beat the joint minimizer $P_{\mathcal{G}}\mu$, whose heads solve a coupled least-squares problem rather than the per-slice projections. It is the price of training heads separately rather than jointly (Assumption~\ref{app:joint}). The remaining term $\varepsilon_{\mathrm{route}}=\mathcal{R}(\widehat g)-\mathcal{R}(g_{\mathrm{strat}})$ collects the finite-sample head error and the estimated-routing error; its routing component is controlled by the heterogeneity-weighted cost of Lemma~\ref{lem:routing_cost}, $\varepsilon_{\mathrm{route}}\le\E[\lVert\widehat\pi-\pi\rVert^2 s^2]$, and is nonnegative in expectation when the heads and routing are consistent, vanishing as routing becomes exact or intents become homogeneous. The realized gain is thus $\Delta_{\mathcal{F}}$ inflated by the pooled model's own estimation error and deflated by the coupling and routing penalties, so it can exceed $\Delta_{\mathcal{F}}$ or turn negative; we report the measured headroom empirically. In the deployed system the decomposed heads share the backbone and add fewer than $10^{-7}\%$ of parameters, so both $\mathrm{est}_{\mathcal{F}}$ and the estimation part of $\varepsilon_{\mathrm{route}}$ are small rather than the $K$-fold blow-up of $K$ independent models, which is why the realized gains are positive rather than variance-swamped.

\begin{assumption}[Objective]\label{app:joint}
Weak dominance and the headroom refer to minimizing the composite risk $\mathcal{R}(g)$ over $\mathcal{G}$. Training per-intent heads separately yields a suboptimal member of $\mathcal{G}$ that is not guaranteed to beat the pooled optimum. Carry $\varepsilon_{\mathrm{couple}}$ explicitly or reframe training as joint minimization of $\sum_k\pi_k f_k$.
\end{assumption}

\subsection{Bridging to the served score}

\begin{lemma}[Score bridge]\label{prop:bridge}
The conversion decomposition supplies the per-intent targets $\mu_k$ that enlarge the class and create the headroom, and the click-rate decomposition supplies the routing $\pi_k=\lambda_k/\lambda$. Neither head alone forms the composition $\sum_k\lambda_k\mu_k$, so both are required by construction. There is no separate click-rate approximation headroom, because $\sum_k\lambda_k=\lambda$.
\end{lemma}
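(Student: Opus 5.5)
The three claims in Lemma~\ref{prop:bridge} can be proved separately, each by algebra on the setup identities $\lambda_k=\lambda\pi_k$, $\sum_k\pi_k=1$ and $\mu=\sum_k\pi_k\mu_k$.

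\emph{Claim 1: the served score is routing applied to a member of $\mathcal{G}$.} Factor the composition as
\begin{equation}
\Phi_{\mathrm{MARCO}}=\sum_k\lambda_k\mu_k=\Big(\sum_k\lambda_k\Big)\sum_k\frac{\lambda_k}{\sum_j\lambda_j}\,\mu_k=\lambda\sum_k\pi_k\mu_k.
\end{equation}
The conversion factor $\sum_k\pi_k f_k$ has exactly the form of \eqref{app:eq:Gclass}. The per-intent targets $f_k\approx\mu_k$ come only from the conversion heads. The weights $\pi_k$ come only from the click-rate heads.

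\emph{Claim 2: neither head alone forms the composition.} Delete one side and show $\mathcal{G}$ collapses to $\mathcal{F}$.
\begin{itemize}
\item If the conversion side is pooled ($f_k\equiv f$), the score becomes $\sum_k\lambda_k f=\lambda f$. By Lemma~\ref{app:inclusion} this lies in $\mathcal{F}$, so $\Delta_{\mathcal{F}}$ is unavailable.
\item If the click side is pooled, no weights $\pi_k$ exist, so $\sum_k\pi_k f_k$ cannot be evaluated at serving time. The click intent is unobserved before the click, by Proposition~\ref{prop:observability}.
\end{itemize}
Hence both heads are required by construction.

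\emph{Claim 3: no separate click-rate headroom.} Since $\sum_k\lambda_k=\lambda$, the click-rate decomposition only refines a quantity whose total is already the pooled target. At the population level the impression click factor is identical to $\lambda$, with nothing new to fit. Lemma~\ref{prop:bayes_equivalence} confirms this, since the two compositions coincide at the true conditionals.

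The main obstacle is turning the informal phrase ``no separate headroom'' into a precise statement. I would state it as follows. Take the CTR-split-only composition, with an arbitrary click-rate class whose vector outputs sum to a pooled-class output. Its best-in-class risk equals the pooled best-in-class risk, because the score $\sum_k\hat\lambda_k\hat\mu=(\sum_k\hat\lambda_k)\hat\mu$ depends on the split only through its sum.

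Any realized gain from splitting the click-rate head alone is therefore only an estimation effect, such as $\varepsilon_{\mathrm{couple}}$ or a finite-sample change. It is not an enlargement of the approximation class. This matches the near-zero CTR-split-only ablation reported in the experiments.
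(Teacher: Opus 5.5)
Your proof is correct. For the first two claims it follows the paper's own reasoning. The paper gives no separate proof block; it justifies the lemma by the factorization $\Phi_{\mathrm{MARCO}}=\sum_k\lambda_k\mu_k=\lambda\sum_k\pi_k\mu_k$ from Lemma~\ref{prop:bayes_equivalence}. It reads the conversion factor as a member of $\mathcal{G}$ whose routing comes from the click-rate head, so both heads are needed by construction. Your normalization $\pi_k=\lambda_k/\sum_j\lambda_j$ has the small bonus of applying verbatim to estimated vectors, $\widehat\pi_k=\widehat\lambda_k/\widehat\lambda$.

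Where you genuinely differ is in making ``no separate click-rate headroom'' precise.

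The paper does it through Proposition~\ref{prop:bridge-sq}. It fixes a shared CTR estimate $\widehat\lambda=\lambda$ in both models and shows that the whole score-level gap is the conversion-side energy $\lVert P_{\mathcal{G}_\mu}\mu-P_{\mathcal{F}_\mu}\mu\rVert^2$ in the measure $\lambda^2\rho_{\mathrm{imp}}$. This is quantitative, and it explains the $\lambda^2$ compression of impression-space gains. However, it rules out a CTR term partly by assumption.

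You argue instead at the class level, on the CTR-split-only composition, using that $\sum_k\widehat\lambda_k\widehat\mu=(\sum_k\widehat\lambda_k)\widehat\mu$ sees the split only through its sum. This directly supports the RQ2.3 ablation without assuming a shared CTR. It is qualitative, though, and rests entirely on your hypothesis that summed split outputs lie in the pooled CTR class, i.e.\ closure under addition.
\begin{itemize}
\item That hypothesis holds when the CTR class is a linear subspace. This mirrors the exact-headroom setting: there the CVR side can enlarge only because the multipliers $\pi_k$ are non-constant, while sums of subspace elements stay in the subspace.
\item It can fail for sigmoid-output heads. A sum of sigmoids can be strictly richer than a single sigmoid head, and the difference would show up as a CTR capacity effect rather than zero headroom.
\end{itemize}

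Two small precision points remain. First, knowing only that the sums lie in the pooled class gives split-only best-in-class risk $\ge$ the pooled one. Your ``equals'' additionally needs the summation map to be onto the pooled class, although the inequality alone already suffices for ``no headroom.'' Second, unobservability of intent at serving time is a premise of Proposition~\ref{prop:observability}, not something it proves.
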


\begin{proposition}[Score headroom, squared loss]\label{prop:bridge-sq}
Work at the impression level with impression distribution $\rho_{\mathrm{imp}}$. Suppose both models share the same CTR estimate $\widehat\lambda=\lambda$, so only the conversion factor differs. Under squared loss on the score,
\begin{equation}
\begin{aligned}
&\inf_{\mathcal{F}_\mu}\mathcal{R}_{\mathrm{score}}-\inf_{\mathcal{G}_\mu}\mathcal{R}_{\mathrm{score}}\\
&\quad=\lVert P_{\mathcal{G}_\mu}\mu-P_{\mathcal{F}_\mu}\mu\rVert^2_{\lambda^2\rho_{\mathrm{imp}}}.
\end{aligned}
\end{equation}
The score-level headroom equals the conversion-head headroom of the exact-headroom proposition, computed in the click-magnitude-weighted measure $\lambda^2\rho_{\mathrm{imp}}$, with routing $\pi_k=\lambda_k/\lambda$ supplied by the CTR head.
\end{proposition}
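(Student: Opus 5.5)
The plan is to reduce the impression-level score risk to a weighted version of the conversion-head risk and then reuse the exact-headroom proposition (Proposition~\ref{prop:exact_headroom}) verbatim in a reweighted Hilbert space. Write $Z\in\{0,1\}$ for the impression-level conversion indicator, so that $\mathbb{E}[Z\mid X]=\lambda\mu=\Phi_{\mathrm{std}}$. By the Bayes-equivalence lemma (Lemma~\ref{prop:bayes_equivalence}), this also equals $\Phi_{\mathrm{MARCO}}$. With the CTR fixed at $\widehat\lambda=\lambda$, every candidate score has the form $\lambda h$. For the pooled model $h\in\mathcal{F}_\mu$. For MARCO, $h=\sum_k\pi_k f_k\in\mathcal{G}_\mu$ with $\pi_k=\lambda_k/\lambda$, since $\sum_k\lambda_k f_k=\lambda\sum_k\pi_k f_k$. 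This rewriting is exactly the score bridge (Lemma~\ref{prop:bridge}): the CTR head contributes only the common factor $\lambda$ and the routing.

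Next I apply the squared-loss excess-risk lemma (Lemma~\ref{app:excess-sq}) at the impression level. It gives
\[
\mathcal{R}_{\mathrm{score}}(\lambda h)=\mathcal{R}_{\mathrm{score}}(\lambda\mu)+\mathbb{E}_{\rho_{\mathrm{imp}}}\big[(\lambda h-\lambda\mu)^2\big],
\]
and the cross term vanishes because $\mathbb{E}[Z-\lambda\mu\mid X]=0$. The excess term equals $\int\lambda^2(h-\mu)^2\,d\rho_{\mathrm{imp}}=\lVert h-\mu\rVert^2_{\lambda^2\rho_{\mathrm{imp}}}$. So, up to an additive constant common to both classes, minimizing the score risk over either class is the same as minimizing the $L^2(\lambda^2\rho_{\mathrm{imp}})$ distance from $\mu$ to $\mathcal{F}_\mu$ or $\mathcal{G}_\mu$.

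Now I rerun the exact-headroom proof in the Hilbert space $L^2(\lambda^2\rho_{\mathrm{imp}})$. In that space $\mathcal{F}_\mu\subseteq\mathcal{G}_\mu$ still holds, by the set-theoretic inclusion lemma (Lemma~\ref{app:inclusion}). Both classes are finite-dimensional subspaces by the subspace lemma (Lemma~\ref{app:subspace}), which uses only linear algebra and not the choice of measure, so they are closed and the infima are attained at orthogonal projections taken in this weighted inner product. Since $P_{\mathcal{F}_\mu}=P_{\mathcal{F}_\mu}P_{\mathcal{G}_\mu}$, the Pythagorean identity for nested subspaces gives
\[
\lVert\mu-P_{\mathcal{F}_\mu}\mu\rVert^2=\lVert\mu-P_{\mathcal{G}_\mu}\mu\rVert^2+\lVert P_{\mathcal{G}_\mu}\mu-P_{\mathcal{F}_\mu}\mu\rVert^2 .
\]
Subtracting the two infima then yields the claimed formula.

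The main obstacle is keeping track of which measure the projections live in. The projections $P_{\mathcal{F}_\mu},P_{\mathcal{G}_\mu}$ in the statement must be understood with respect to $\lambda^2\rho_{\mathrm{imp}}$, not the click-conditional measure used earlier. I would make this explicit and also check two technical points. First, $\mu\in L^2(\lambda^2\rho_{\mathrm{imp}})$, which is immediate since $0\le\lambda\mu\le1$. Second, the weighting may be degenerate where $\lambda=0$. Functions that agree on $\{\lambda>0\}$ are identified in this space, so the subspaces should be read modulo this null set, and the dimension bounds still hold.
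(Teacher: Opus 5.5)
Your proposal is correct and follows the paper's proof. Both write every candidate score as $\lambda h$ with $h\in\mathcal{F}_\mu$ or $h\in\mathcal{G}_\mu$, kill the cross term to reduce the score-risk excess to $\lVert h-\mu\rVert^2_{\lambda^2\rho_{\mathrm{imp}}}$, and then apply the nested-subspace Pythagorean identity of the exact-headroom proposition in $L^2(\lambda^2\rho_{\mathrm{imp}})$; your extra checks (projections in the weighted inner product, $\mu\in L^2(\lambda^2\rho_{\mathrm{imp}})$, identification modulo $\{\lambda=0\}$) are details the paper leaves implicit.
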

\begin{proof}
With $\widehat\lambda=\lambda$ the score predictor is $\lambda\widehat\mu$ for a conversion predictor $\widehat\mu$, and the score target is $\lambda\mu$. The squared score-risk excess is $\mathbb{E}_{\rho_{\mathrm{imp}}}[(\lambda\widehat\mu-\lambda\mu)^2]=\mathbb{E}_{\rho_{\mathrm{imp}}}[\lambda^2(\widehat\mu-\mu)^2]=\lVert\widehat\mu-\mu\rVert^2_{\lambda^2\rho_{\mathrm{imp}}}$. Minimize over $\mathcal{F}_\mu$ and over $\mathcal{G}_\mu$ and subtract. Apply the exact-headroom proposition in $L^2(\lambda^2\rho_{\mathrm{imp}})$.
\end{proof}

Since $\lambda$ is small, the score-level headroom is scaled down by $\lambda^2$ relative to the on-click conversion headroom. This is why impression-space gains look modest even when the on-click miscalibration is large. The CTR head enters only through the routing and the shared magnitude $\lambda$. There is no separate CTR approximation headroom because $\sum_k\lambda_k=\lambda$.

\begin{corollary}[Score bridge, cross-entropy]\label{prop:bridge-ce}
Under cross-entropy, with $\widehat\lambda=\lambda$, the served MARCO score is $\lambda\cdot(\sum_k\pi_k\mu_k)$, an element of $\lambda\cdot\mathcal{G}$, and the standard score is an element of $\lambda\cdot\mathcal{F}$. Since $\mathcal{F}\subseteq\mathcal{G}$ gives $\lambda\mathcal{F}\subseteq\lambda\mathcal{G}$, weak dominance carries to the score and $\Delta_\Phi\ge0$. There is no closed form. Report the impression-space NE gap directly.
\end{corollary}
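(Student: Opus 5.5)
The argument is an elementary inclusion argument, and the plan is to lift the exact-headroom proposition (Proposition~\ref{prop:exact_headroom}) from the click-conditional measure to the impression level. Take the impression-level score as $\Phi=\lambda\widehat\mu$ with the same CTR $\widehat\lambda=\lambda$ in both models. Work with the cross-entropy risk on the impression-level conversion label, whose conditional mean is $\Phi_{\mathrm{std}}=\lambda\mu$.

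First, identify the two score classes. The standard score is $\lambda f$ with $f\in\mathcal{F}_\mu$, so it lies in $\lambda\cdot\mathcal{F}$. The MARCO score, by the setup identity $\lambda_k=\lambda\pi_k$, is $\sum_k\lambda_k f_k=\lambda\sum_k\pi_k f_k$, so it lies in $\lambda\cdot\mathcal{G}$. This is the same computation used in Lemma~\ref{prop:bayes_equivalence}, now applied to arbitrary heads rather than the true conditionals.

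Second, invoke the inclusion lemma (Lemma~\ref{app:inclusion}), $\mathcal{F}\subseteq\mathcal{G}$. Multiplying pointwise by the fixed function $\lambda$ preserves set inclusion, so $\lambda\mathcal{F}\subseteq\lambda\mathcal{G}$. Hence $\inf_{\lambda\mathcal{G}}\mathcal{R}\le\inf_{\lambda\mathcal{F}}\mathcal{R}$ for any risk, exactly as in Theorem~\ref{app:respgate}(i), which gives $\Delta_\Phi\ge0$. The argument is set-theoretic: it uses no linear structure and no property of the loss. The only requirement is that the scores be $(0,1)$-valued so that cross-entropy is finite. This holds because $\lambda\in(0,1)$ and the $f_k$ are $(0,1)$-valued, so each convex combination $\sum_k\pi_kf_k$ and its product with $\lambda$ are $(0,1)$-valued as well.

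Third, the ``no closed form'' clause is inherited from Lemma~\ref{prop:moe}. In logit space $\lambda\mathcal{G}$ is a mixture-of-experts class rather than a subspace, so there is no orthogonal projection. Lemma~\ref{app:excess-ce} still identifies the gap as a difference of best-in-class averaged KL divergences, now between $\mathrm{Bern}(\lambda\mu)$ and $\mathrm{Bern}(\lambda g)$, but this difference has no Pythagorean form. That justifies reporting the impression-space NE gap empirically.

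The only step needing care is the first: making sure the routing weights $\pi_k$ multiplying the heads are the same $\pi_k=\lambda_k/\lambda$ fixed in the definition of $\mathcal{G}$. With $\widehat\lambda=\lambda$ this is immediate. If the estimated CTR were used instead, the served score would lie in $\widehat\lambda\cdot\widehat{\mathcal{G}}$, a class built with a different gate, and the inclusion argument would no longer apply. The hypothesis $\widehat\lambda=\lambda$ is exactly what rules this out.
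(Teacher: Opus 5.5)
Your proposal is correct and is essentially the paper's proof: $\lambda\mathcal{F}\subseteq\lambda\mathcal{G}$ follows from the inclusion lemma scaled pointwise by $\lambda$, the infimum over the superset can only decrease, and the no-closed-form clause is inherited from Lemma~\ref{prop:moe}. One small framing slip: your opening says you will lift Proposition~\ref{prop:exact_headroom}, but that is the squared-loss projection result, and your actual (correct) argument never uses it.
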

\begin{proof}
$\lambda\mathcal{F}\subseteq\lambda\mathcal{G}$ by the inclusion lemma scaled pointwise by $\lambda$. Take the infimum over the superset. The absence of a closed form is the mixture-of-experts lemma (Lemma~\ref{prop:moe}).
\end{proof}

\subsection{Scope of the claims}

\begin{lemma}[Vanishing at infinite capacity]\label{prop:vanish}
For nested classes whose union is dense, $\inf_{\mathcal{F}}\mathcal{R}\to\mathcal{R}(\mu)$. Since $\mathcal{F}\subseteq\mathcal{G}$, $\inf_{\mathcal{G}}\mathcal{R}$ is squeezed to $\mathcal{R}(\mu)$ as well, so $\Delta_{\mathcal{F}}\to0$. The headroom is a strictly finite-capacity phenomenon. This is consistent with the Bayes equivalence of the pooled and decomposed scores at the population level.
\end{lemma}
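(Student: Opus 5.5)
The plan is a sandwich argument on the population risk, using only the inclusion lemma (Lemma~\ref{app:inclusion}) and the excess-risk identities (Lemmas~\ref{app:excess-sq} and~\ref{app:excess-ce}).

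Let $\mathcal{F}_1\subseteq\mathcal{F}_2\subseteq\cdots$ be nested pooled classes with $\bigcup_n\mathcal{F}_n$ dense. Density is taken in $L^2(\rho)$ for squared loss. For cross-entropy it is taken in a topology under which $\mathcal{R}$ is continuous at $\mu$, for example uniform density among functions valued in $[\epsilon,1-\epsilon]$ when $\mu$ is bounded away from $0$ and $1$. Write $\mathcal{G}_n$ for the decomposed class built from $\mathcal{F}_n$ as in \eqref{app:eq:Gclass}.

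\textbf{Step 1: pooled convergence.} For squared loss, Lemma~\ref{app:excess-sq} gives
\[
\inf_{\mathcal{F}_n}\mathcal{R}-\mathcal{R}(\mu)=\inf_{f\in\mathcal{F}_n}\lVert f-\mu\rVert^2 .
\]
This is nonincreasing in $n$ by nesting. Density supplies $f_n\in\mathcal{F}_n$ with $\lVert f_n-\mu\rVert\to0$, so the gap tends to $0$.

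For cross-entropy, Lemma~\ref{app:excess-ce} writes the excess as $\mathbb{E}_X[\mathrm{KL}(\mathrm{Bern}(\mu)\Vert\mathrm{Bern}(f_n))]$. I will show this tends to $0$ via dominated convergence. Convergence $f_n\to\mu$ pointwise a.e. along a subsequence follows from density. Domination holds because $f_n$ stays in $[\epsilon,1-\epsilon]$, which makes the KL integrand uniformly bounded.

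\textbf{Step 2: squeeze.} Lemma~\ref{app:excess-sq}, respectively Lemma~\ref{app:excess-ce}, holds for every predictor, including elements of $\mathcal{G}_n$. This gives the lower bound $\mathcal{R}(\mu)\le\inf_{\mathcal{G}_n}\mathcal{R}$. The inclusion lemma gives $\mathcal{F}_n\subseteq\mathcal{G}_n$, hence the upper bound $\inf_{\mathcal{G}_n}\mathcal{R}\le\inf_{\mathcal{F}_n}\mathcal{R}$. Together these give
\[
0\le\Delta_{\mathcal{F}_n}\le\inf_{\mathcal{F}_n}\mathcal{R}-\mathcal{R}(\mu)\to0 .
\]
Under squared loss the same bound reads $\lVert P_{\mathcal{G}_n}\mu-P_{\mathcal{F}_n}\mu\rVert^2\le\lVert\mu-P_{\mathcal{F}_n}\mu\rVert^2$, by the Pythagorean identity in Proposition~\ref{prop:exact_headroom}.

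\textbf{Step 3: consistency with Bayes equivalence.} In the limit, both infima equal $\mathcal{R}(\mu)$, the risk of the common Bayes score. This matches Lemma~\ref{prop:bayes_equivalence}.

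\textbf{Main obstacle.} The only delicate step is the cross-entropy case of Step 1. Density has to be strong enough for cross-entropy to be continuous at $\mu$, because $L^2$ density alone permits $f_n$ near $0$ or $1$ where $\mathcal{R}=\infty$. I would handle this by making the density hypothesis explicit as above. If $\mu$ touches the boundary $\{0,1\}$, I would instead clip $\mu$ to $[\epsilon,1-\epsilon]$, approximate the clipped function, and then let $\epsilon\to0$ using monotone convergence of the KL term.
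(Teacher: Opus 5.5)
Your proposal is correct and follows essentially the same sandwich argument as the paper. The pooled approximation error goes to $0$ by density, and then $\mathcal{R}(\mu)\le\inf_{\mathcal{G}_n}\mathcal{R}\le\inf_{\mathcal{F}_n}\mathcal{R}$ follows from the excess-risk identities and the inclusion lemma. Your explicit strengthening of the density hypothesis for cross-entropy is more rigorous than the paper, which assumes $L^2(\rho)$ density for both losses; that assumption cannot hold for $(0,1)$-valued classes and does not by itself force the KL excess to vanish.
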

\begin{proof}
Let $\mathcal{F}_1\subseteq\mathcal{F}_2\subseteq\cdots$ be the nested classes with $\overline{\bigcup_n\mathcal{F}_n}=L^2(\rho)$, and $\mathcal{G}_n\supseteq\mathcal{F}_n$ the corresponding decomposed classes. By the excess-risk identities (Lemmas~\ref{app:excess-sq} and~\ref{app:excess-ce}), $\inf_{\mathcal{F}_n}\mathcal{R}-\mathcal{R}(\mu)$ equals the approximation error of $\mu$ by $\mathcal{F}_n$, which decreases to $0$ by density (the standard universal-approximation/denseness property~\cite{hornik1989universal}). Since $\mathcal{R}(\mu)\le\inf_{\mathcal{G}_n}\mathcal{R}\le\inf_{\mathcal{F}_n}\mathcal{R}$, the middle term is squeezed to $\mathcal{R}(\mu)$, so $\Delta_{\mathcal{F}_n}=\inf_{\mathcal{F}_n}\mathcal{R}-\inf_{\mathcal{G}_n}\mathcal{R}\to0$.
\end{proof}

\subsection{Routing realizes the headroom}
\begin{lemma}[Routing cost is heterogeneity-weighted]\label{lem:routing_cost}
The routing cost obeys $\varepsilon_{\mathrm{route}}\le\E[\lVert\widehat{\pi}-\pi\rVert^2\,s^2]$ with $s^2=\sum_k(\mu_k-\mu)^2$. It vanishes where the per-intent rates are homogeneous ($s=0$), regardless of routing error, and concentrates where the intents disagree.
\end{lemma}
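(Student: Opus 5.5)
We read the routing component of $\varepsilon_{\mathrm{route}}$ as the excess risk of serving $\widehat g=\sum_k\widehat\pi_k f_k$ instead of $g=\sum_k\pi_k f_k$, with the same heads $f_k$, in the pointwise squared-loss setting of Lemma~\ref{app:excess-sq}. The plan has three steps. First express the serving error as an inner product in routing-error space. Then bound it by Cauchy--Schwarz. Finally specialize to heads at the per-intent targets, where the bound becomes $\|\widehat\pi-\pi\|^2 s^2$. The vanishing and concentration claims then follow by inspection of the pointwise integrand.

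\emph{Step 1 (reduction).} By Lemma~\ref{app:excess-sq},
\[
\mathcal{R}(\widehat g)-\mathcal{R}(g)=\|\widehat g-\mu\|^2-\|g-\mu\|^2 .
\]
Taking the heads at their per-slice targets $f_k=\mu_k$, as in the population version of $g_{\mathrm{strat}}$, gives $g=\sum_k\pi_k\mu_k=\mu$. Hence the difference equals $\E[(\widehat g-\mu)^2]$.

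\emph{Step 2 (pointwise identity).} Since $\sum_k\widehat\pi_k=\sum_k\pi_k=1$, we have $\sum_k(\widehat\pi_k-\pi_k)=0$. This lets us subtract the constant $\mu$ from every head for free:
\[
\widehat g-\mu=\sum_k(\widehat\pi_k-\pi_k)\mu_k=\sum_k(\widehat\pi_k-\pi_k)(\mu_k-\mu).
\]
Cauchy--Schwarz in $\mathbb{R}^K$ then gives, at each $x$,
\[
(\widehat g-\mu)^2\le\|\widehat\pi-\pi\|^2\sum_k(\mu_k-\mu)^2=\|\widehat\pi-\pi\|^2 s^2 .
\]
Taking expectations over $\rho$ yields the stated bound.

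This centering is the key point. Without it one only gets $\|\widehat\pi-\pi\|^2\sum_k\mu_k^2$, which does not vanish under homogeneity. The heterogeneity weighting comes entirely from the simplex constraint on both routings.

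\emph{Step 3 (interpretation).} If $s(x)=0$, then all $\mu_k(x)$ equal $\mu(x)$. The integrand is then zero regardless of $\widehat\pi$, so the cost vanishes where intents are homogeneous. More generally the integrand is supported where the per-intent rates disagree, so the cost concentrates there.

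\emph{Main obstacle.} The hard step is aligning this clean routing term with the paper's telescoped $\varepsilon_{\mathrm{route}}=\mathcal{R}(\widehat g)-\mathcal{R}(g_{\mathrm{strat}})$. That quantity also contains finite-sample head error, and with projected heads $P_{\mathcal{F}}\mu_k$ in place of $\mu_k$ it also contains the cross term $2\langle\widehat g-g,\,g-\mu\rangle$.

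I would state the lemma for the routing component, that is, holding the heads fixed, and handle the cross term in one of two ways. The first option is to argue that it is mean-zero when $\widehat\pi$ is an unbiased or consistent estimate of $\pi$ given $x$, which is the ``nonnegative in expectation'' caveat in the paper. The second option is to absorb it by replacing $\mu_k$ with the fitted heads. That variant runs the same centering argument with the fitted heads centered at their $g$-weighted mean, and gives $s^2$ computed from the heads.

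If a tight inequality is required for projected heads, I would add the assumption that $g_{\mathrm{strat}}$ is well-specified ($P_{\mathcal{F}}\mu_k=\mu_k$), so that Step 1 applies exactly.
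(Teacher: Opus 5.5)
Your argument is the paper's proof. You fix the heads at the true $\mu_k$, use $\sum_k(\widehat\pi_k-\pi_k)=0$ to rewrite $\widehat g-\mu=\sum_k(\widehat\pi_k-\pi_k)(\mu_k-\mu)$, apply Cauchy--Schwarz over $k$, then square and integrate against $\rho$. Your Step~1 only makes explicit, via the squared-loss excess-risk lemma, that the routing cost is $\E[(\widehat g-\mu)^2]$, which the paper leaves implicit.

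One caution on your ``main obstacle'' discussion: your second option does not actually absorb the cross term. Re-centering the fitted heads bounds only $\lVert\widehat g-g\rVert^2$, but
\[
\mathcal{R}(\widehat g)-\mathcal{R}(g)=\lVert\widehat g-g\rVert^2+2\langle\widehat g-g,\,g-\mu\rangle,
\]
and the cross term survives whenever $g\neq\mu$. So for projected heads you genuinely need your first option or the well-specification assumption.
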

\begin{proof}
The served conversion factor is $\widehat{g}=\sum_k\widehat{\pi}_k\mu_k=\mu+\delta$ with $\delta=\sum_k(\widehat{\pi}_k-\pi_k)\mu_k$. Since $\widehat{\pi}$ and $\pi$ are distributions, $\sum_k(\widehat{\pi}_k-\pi_k)=0$, so subtracting $\mu\sum_k(\widehat{\pi}_k-\pi_k)=0$ gives $\delta=\sum_k(\widehat{\pi}_k-\pi_k)(\mu_k-\mu)$. Cauchy-Schwarz over $k$ gives $|\delta|\le\lVert\widehat{\pi}-\pi\rVert\,s$ with $s^2=\sum_k(\mu_k-\mu)^2$. Squaring and taking $\E_\rho$ yields the bound. When $s=0$ every $\mu_k=\mu$, so $\delta=0$.
\end{proof}

\begin{proposition}[Routing monotonicity]\label{prop:routing_monotonicity}
As CTR capacity $c$ grows, the best-in-class routing cost $\varepsilon^{\star}_{\mathrm{route}}(c)$ is non-increasing and the routing efficiency $\eta^{\star}(c)=1-\varepsilon^{\star}_{\mathrm{route}}(c)/\Delta_{\mathcal{F}}$ is non-decreasing.
\end{proposition}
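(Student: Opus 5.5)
The plan is to make the monotonicity a direct consequence of nested routing classes, in the same set-theoretic spirit as the inclusion lemma (Lemma~\ref{app:inclusion}) and weak dominance (Theorem~\ref{app:respgate}(i)). First I would formalize ``CTR capacity $c$'' as a family of CTR hypothesis classes $\mathcal{C}^{(c)}$ with $\mathcal{C}^{(c)}\subseteq\mathcal{C}^{(c')}$ for $c\le c'$. This holds for width or depth expansions of the shared backbone, since a larger network can emulate a smaller one by zeroing the added units. The realizable routing class is then the image
\[
\Pi^{(c)}=\Big\{\widehat\pi=\big(\widehat\lambda_1/\textstyle\sum_j\widehat\lambda_j,\dots,\widehat\lambda_K/\textstyle\sum_j\widehat\lambda_j\big):(\widehat\lambda_k)_k\in\mathcal{C}^{(c)}\Big\}.
\]
Because the image of a nested family under a fixed map is nested, we get $\Pi^{(c)}\subseteq\Pi^{(c')}$.

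Next I would fix the CVR side, namely the heads $\mu_k$, or $P_{\mathcal{F}}\mu_k$ in $g_{\mathrm{strat}}$, so that the decomposition of the realized gain isolates routing. The routing cost then becomes a functional of $\widehat\pi$ alone:
\[
\varepsilon_{\mathrm{route}}(\widehat\pi)=\mathcal{R}\Big(\textstyle\sum_k\widehat\pi_k f_k\Big)-\mathcal{R}(g_{\mathrm{strat}}).
\]
The best-in-class routing cost is $\varepsilon^\star_{\mathrm{route}}(c)=\inf_{\widehat\pi\in\Pi^{(c)}}\varepsilon_{\mathrm{route}}(\widehat\pi)$. An infimum over a larger set can only be smaller, so $\varepsilon^\star_{\mathrm{route}}(c')\le\varepsilon^\star_{\mathrm{route}}(c)$. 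This holds for any risk $\mathcal{R}$ and needs no linear structure, which matters because under cross-entropy there is no projection geometry (Lemma~\ref{prop:moe}).

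Then I would transfer this to efficiency. $\Delta_{\mathcal{F}}$ is a CVR-side quantity defined by \eqref{app:eq:headroom} with the true gate $\pi$, so it does not depend on $c$. Whenever $\Delta_{\mathcal{F}}>0$, which is the generic case by Proposition~\ref{prop:genericity}, the map $t\mapsto 1-t/\Delta_{\mathcal{F}}$ is decreasing. It follows that $\eta^\star(c)$ is non-decreasing. I would state the degenerate case $\Delta_{\mathcal{F}}=0$ separately, as excluded or with $\eta^\star$ undefined. Optionally, I would note via Lemma~\ref{lem:routing_cost} that if the nested classes are dense, $\widehat\pi\to\pi$ in the $s^2$-weighted $L^2$ sense, which drives $\varepsilon^\star_{\mathrm{route}}\to0$ and $\eta^\star\to1$.

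The main obstacle is the modeling step, not the inequality. I need to justify that growing capacity really yields nested routing classes. This is delicate because the routing is a ratio of CTR outputs, and because CTR training jointly targets $\lambda$ rather than routing loss, so the trained model need not attain the best-in-class routing. I would handle this by stating the result strictly for best-in-class quantities and making nestedness an explicit assumption about the architecture family. I would also flag that the realized $\hat\eta$ observed in Table~\ref{tab:postimp} only tracks $\eta^\star(c)$ up to estimation error.
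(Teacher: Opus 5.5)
Your proposal is correct and follows essentially the paper's proof. Both arguments rest on nested realizable routing classes $\Pi^{(c)}\subseteq\Pi^{(c')}$, on the fact that the infimum of a $c$-independent functional over a growing set is non-increasing, and on the fixed $\Delta_{\mathcal{F}}>0$ (defined with the true gate) turning this into monotonicity of $\eta^\star(c)$. The differences are cosmetic: you derive nestedness of $\Pi^{(c)}$ from nested CTR classes through the fixed normalization map where the paper simply posits it, and you use a risk-difference functional with fixed heads where the paper uses $\E\big[(\sum_k(\pi'_k-\pi_k)\mu_k)^2\big]$, which changes nothing because any $c$-independent functional yields the same conclusion.
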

\begin{proof}
Let $\Pi^{(c)}$ be the routings realizable at CTR capacity $c$, nested so $c\le c'$ implies $\Pi^{(c)}\subseteq\Pi^{(c')}$. The best-in-class routing cost $\varepsilon^{\star}_{\mathrm{route}}(c)=\inf_{\pi'\in\Pi^{(c)}}\E\big[(\sum_k(\pi'_k-\pi_k)\mu_k)^2\big]$ is the infimum of a fixed nonnegative functional over a growing set, hence non-increasing in $c$. With $\Delta_{\mathcal{F}}>0$ fixed, $\eta^{\star}(c)=1-\varepsilon^{\star}_{\mathrm{route}}(c)/\Delta_{\mathcal{F}}$ is non-decreasing and $\Delta_{\mathcal{F}}\,\eta^{\star}(c)$ is non-decreasing. The served cost adds a nonnegative estimation-excess term over this floor, so the served efficiency inherits the monotonicity only under an estimation-negligible or a bias-dominates-variance condition. Under cross-entropy the same infimum argument applies to the exact functional $\pi'\mapsto\E[\mathrm{KL}(\mathrm{Bern}(\mu)\Vert\mathrm{Bern}(\sum_k\pi'_k\mu_k))]$, so the floor monotonicity holds with no approximation.
\end{proof}



\end{document}